\theoremstyle{plain}
\newtheorem{thm}{Theorem}[section]
\newtheorem{lem}[thm]{Lemma}
\theoremstyle{definition}
\newtheorem{defn}{Definition}[section]
\theoremstyle{remark}
\theoremstyle{casep}
\newtheorem{casep}{Case}
\DeclareMathOperator*{\argmax}{arg\,max}
\newcommand{\E}{\mathop{\mathbb E}}
\title{LISPR: An Options Framework for Policy Reuse with Reinforcement Learning}
\date{}
\author[1]{Daniel Graves}
\author[1,2]{Jun Jin}
\author[1]{Jun Luo}
\affil[1]{Noah's Ark Lab, Huawei Technologies Canada, Ltd., Canada}
\affil[2]{Computing Science Department, University of Alberta, Canada}
\begin{document}

\maketitle

\begin{abstract}
We propose a framework for transferring any existing policy from a potentially unknown source MDP to a target MDP.
This framework (1) enables reuse in the target domain of any form of source policy, including classical controllers, heuristic policies, or deep neural network-based policies, (2) attains optimality under suitable theoretical conditions, and (3) guarantees improvement over the source policy in the target MDP.
These are achieved by packaging the source policy as a black-box option in the target MDP and providing a theoretically grounded way to learn the option's initiation set through general value functions.
Our approach facilitates the learning of new policies by (1) maximizing the target MDP reward with the help of the black-box option, and (2) returning the agent to states in the learned initiation set of the black-box option where it is already optimal.
We show that these two variants are equivalent in performance under some conditions.
Through a series of experiments in simulated environments, we demonstrate that our framework performs excellently in sparse reward problems given (sub-)optimal source policies and improves upon prior art in transfer methods such as continual learning and progressive networks, which lack our framework's desirable theoretical properties.
\end{abstract}


\section{Introduction}
One of the most important challenges in applying reinforcement learning (RL) to real-world problems is learning policies that work well to variations of the training task \citep{whiteson2011, zhao2019, henderson2017, henderson2017rlmatters, akkaya2019solving}.
Unlike game environments that offer well-defined tasks, real-world tasks are much harder to encapsulate.
Delineating the task too narrowly limits the applicability and generality of the learned solution.
Delineating the task too broadly makes it inefficient if not impossible to learn \textit{tabula rasa} -- without any prior knowledge of the task environment.
Transferring policies from a source to a target domain is an important technique for addressing this challenge \citep{taylor2009, taylor2011, czarnecki2018}.
Our goal is to present a theoretically grounded framework for reusing and improving upon prior knowledge in the form of available policies.

\begin{figure}
    \centering
    \includegraphics[width=0.2\textwidth]{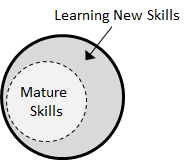}
    \caption{LISPR learns a new policy in the zone of proximal development while making use of a mature policy.}
    \label{fig_zone_of_proximal_development}
\end{figure}

We draw inspiration from developmental psychology, where the \textit{zone of proximal development} \citep{vygotsky1978} refers to a state where the child already has relevant skills but needs extra support to perform a task as shown in Figure \ref{fig_zone_of_proximal_development}.
A 2-year-old may be able to walk on the floor and crawl up stairs, but still needs a caregiver to hold their hands to walk up stairs.
This same child may also know how to unroll toilet paper and spread it all over the floor, but is yet to learn to tear off just as much as needed.
With the social context setting up appropriate learning opportunities for such a child in the zone of proximal development, they will further develop their prior skills to eventually solve these new problems independently.

The concept of options \citep{sutton1999} provides an excellent foundation for transfer prior skills to new challenges \citep{barreto2019, yang2020ptf}.
From the perspective of options, an agent in the zone of proximal development may be viewed as in the process of developing a well-orchestrated set of options, wherein a subset of mature but limited options could be enlisted to both provide a basic level of performance and facilitate the transfer process.

To turn this line of thinking into a concrete transfer learning framework, we propose the following specific method.
First, (1) the source policy is packaged up as an option, which we call the \textit{primal option}.
Second, (2) the initiation set of the primal option is learned in the target domain using general value functions (GVFs) \citep{sutton2011}.
Third, (3) a \textit{learner option} is introduced, which may be either a \textit{recovery option} or a \textit{student option} according to whether it is to learn to support the reuse of the primal option or to learn from it towards superseding it.
Lastly, (4) a higher-level \textit{main policy} is used to control the choice between the primal option and the learner option based on the learned initiation set of the primal option.
Because the core of this framework lies in learning the initiation set, we call it \textit{Learning Initiation Set for Policy Reuse} (LISPR).

In the rest of this paper, we (a) explain the LISPR framework in detail, (b) prove theorems about its optimality and performance guarantees, and (c) show how it flexibly supports both the reuse of a source policy as is and the use of it as a tutor for a stronger student option.  We use experiments to demonstrate (i) how LISPR allows us to use RL to transfer non-learning-based heuristic policies, (ii) how it makes sparse reward problems easier to solve, and (iii) how it exhibits more consistent learning performance than methods of feature transfer.
\section{Related Works}

\subsubsection{Learning Initiation Sets}
Despite growing attention to option discovery, research tend to focus on learning policies and termination functions \citep{machado2017, harutyunyan2019}, rather than learning initiation sets of options \citep{demir2019, barreto2019, khetarpal2020optionsinterest}.
The Option Keyboard \citep{barreto2019} provides a way to learn the initiation set of a class of options defined by cumulants by adding a special action called the termination action that terminates the option.
The Options of Interest approach \citep{khetarpal2020optionsinterest} learns a generalization of initiation sets defined by interest functions and derives an algorithm called the interest-option critic to learn these functions.
Heuristic goal-oriented methods of generating initiation sets are proposed in \citep{demir2019} for simple tabular domains.
In contrast to these approaches, LISPR learns initiation sets through learning success predictors that are essentially value functions.
This is why LISPR naturally has its desirable theoretical properties and guarantees.
In \citep{mcgovern2001}, authors suggest that frequently visited states could be used as sub-goals for options, which has motivated more recent work such as \citep{machado2017}.
While visitation frequency can also be used as a heuristic to identify initiation sets \citep{stolle2002} \citep{demir2019}, estimates of state visitation lead to divide and conquer strategies that may be challenging to scale to large problems.
In contrast, LISPR allows performance of the overall system to continue to improve as new skills are learned similar to what happens in zones of proximal development \citep{vygotsky1978} when new neural activity continues \citep{oby2019} to develop.

\subsubsection{Policy Transfer \& Policy Reuse}
Recently, the policy transfer framework (PTF) \citep{yang2020ptf} introduced a way to use the options framework to set up policy distillation for transferring source policies to the target policy.
PTF learns the termination function of the options by maximizing the expected return but the initiation set is assumed to be universal.
Like PTF, LISPR does not need to measure the similarity between the source and target MDPs.
Source policies have been used to explore more efficiently towards an optimal target policy \citep{siyuan2017sourcepolicybandit, kurenkov2019acteach, siyuan2019contextawarepolicyreuse}.
For example, \cite{siyuan2017sourcepolicybandit} reduce the problem of selecting a source policy for exploration to a multi-armed bandit problem where a wide body of research provides theoretically grounded ways of exploring.
Our proposed LISPR framework differs in that it formulates the problem of policy transfer to learning initiation sets of options in the target MDP.
This not only allows transfer through distillation and exploration as is the case with \textit{learning to master} (Section \ref{sec_learning_to_master}) but also enables transfer through \textit{learning to recover} (Section \ref{sec_learning_to_recover}) when the source policy performance falls short in the new task.
Learning to recover could be highly valuable if exploration in the part of the state space covered by the source policy is prohibitively dangerous or expensive.
In \citep{kurenkov2019acteach}, multiple teachers are used to improve learning while avoiding negative transfer.
This is achieved by learning a Bayesian value function with heuristic multi-step exploration.  LISPR avoids negative transfer differently by learning the initiation set of the source policy such that the policy is only executed when it is useful.
An underyling theme in most policy transfer and policy reuse works is the lack of theoretically grounded frameworks to support and motivate their use.
\cite{siyuan2019contextawarepolicyreuse} provides theory for convergence and optimality but is limited to discrete action policies and requires that the source policies must be optimal; LISPR is applied to continuous action spaces and does not require the source policies be optimal.

\cite{fernando2010probpolicyreuse} developed probabilistic policy reuse that chooses policies stochastically using a Boltzmann distribution over the returns.
They apply their method to RoboCup keepaway problem showing speed up in learning; no supporting theory is provided.
A simple method to transfer a PID controller to a target domain is shown in \cite{xie2018trainingwheels}.
DDPG was used to learn a second policy and DQN was applied to choose between the PID controller and the DDPG policy.
They found their method improved exploration significantly.
LISPR on the other hand replaces DQN with a simple heuristic that we show is optimal.
While most methods focus on multi-task transfer, LISPR focuses on single task transfer which is arguably a more practical use case for transfer learning.

\subsubsection{Feature Transfer}
Perhaps the most common methods of transfer with RL is feature transfer.
A dominant approach is to initialize the weights of a policy with prior knowledge (e.g. supervised learning to pre-train AlphaGo \citep{silver2016}) for later fine tuning.
However, catastrophic forgetting \citep{kirkpatrick2017} is a common issue.
Progressive neural networks \citep{rusu2017} address this issue by injecting features of the source policy into each layer of the new policy to speed up learning.
But this presupposes a deep neural network architecture which rules out hybrid solutions combining classical and learned policies.
In addition, these methods of transfer do not offer theoretical guarantees.
Learning reset policies could also improve learning by avoiding costly resets and methods of thresholding a value function similar to that of LISPR is common in learning reset policies \citep{eysenbach2018reset}.
But the thresholding is assumed to be a constant and has no supporting theory behind it.
In contrast, LISPR is more general because the threshold is adaptive and is theoretically well-grounded.

Successor features \cite{barreto2017successorfeatures} is a popular transfer learning framework that developed out of an older idea called successor representation.
The underlying idea is to exploit a linear relationship between reward and features in multiple related problems by learning feature predictors.
However, the assumption is that the same features can be linearly combined in different ways to produce different rewards.
This works well in problems that are closely related but in practice, this assumption can be quite strong since when adding complexity that the agent has never seen before.
The features may not be expressive enough to capture the complexity in some target domains.
In addition, successor features is discrete action; we focus on transferring policies in continuous action spaces.
The advantage of LISPR is that it permits transfer of any black-box policy while all feature transfer methods do not permit transfer of non-neural network source policies.

\subsubsection{Curriculum Learning}
LISPR has some apparent connections with curriculum learning \citep{czarnecki2018}.
Consider the case where $\tau(s)$ is set to some parameter $\lambda$.
From Theorem \ref{thm_recovery_optimality} (and similarly Theorem \ref{thm_student_optimality}), it is clear that the main policy (Definition \ref{def_main_policy} is not optimal unless $\lambda \geq \sup_{\forall s \in S}{V^{R}(s)}$.
Otherwise, the main policy will never select the primal option and thus the LISPR framework reduces to standard reinforcement learning.
This presents an interesting idea of creating a schedule $\lambda(t)$ such that $\lambda(t)$ increases gradually over time until $\lambda \geq \sup_{\forall s \in S}{V^{M}(s)}$ is satisfied.
The result is that while the recovery policy is being trained, the size of the initiation set $L$ slowly reduces to $\emptyset$.
We hypothesize that this constitutes a curriculum learning setup because small values of $\lambda$ present an easier problem for the recovery policy compared to larger values of $\lambda$, because for smaller values of $\lambda$ the initiation set $L$ is larger.
This is trivial to show from Definition \ref{def_initiation_set} where $L_2(s) \subset L_1(s)$ if $\lambda_2 > \lambda_1$.
Therefore, LISPR could be viewed as doing ``self-paced'' curriculum learning if $\tau(s)$ gradually increases over the course of learning as one might expect to be the case for $\tau(s)=V^{R}(s)$ given appropriate initialization of $V^{R}(s)$.
A second connection with curriculum learning has to do with the relationship between the source and target MDPs.
Choosing a source policy that transfers well to the target MDP is a challenging problem.
We hypothesize that a good choice of source policy would likely constitute some sort of curricular design.
Our \textbf{bipedal walker task} is an example of such curricular design where the source policy is learned without pits, stairs, or obstacles before transferring to the harder task where these are all present.

\subsubsection{Scaffolding}
Another idea that is closely related to the LISPR framework is scaffolding \citep{wood1976}.
Scaffolding involves a tutor that provides guidance to help the agent solve the problem.
If the skill to transfer is an expert policy such as a heuristic policy, then LISPR can be considered a kind of \textit{scaffolding framework} where the expert policy takes the role of a tutor \citep{wood1976}.
However, the LISPR framework is more general than scaffolding because all prior skills learned by the agent on previous tasks can also be considered source policies to be reused again.
Moreover, the agent's performance is not bounded by the performance of the expert as the theory guarantees continuous improvement towards optimality.

\section{The LISPR Framework}

\begin{figure}
    \centering
    \includegraphics[width=0.45\textwidth]{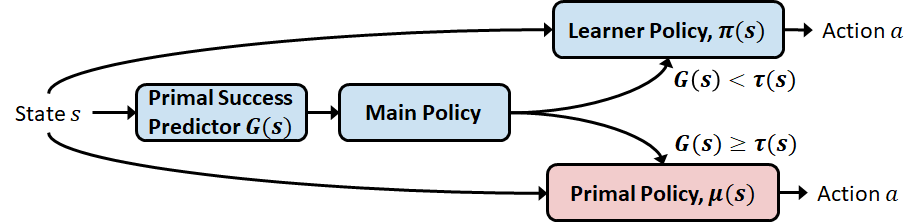}
    \caption{LISPR framework learns $G(s)$ which predicts the success of the primal policy and is used to construct its initiation set.  The main policy decides whether to execute the primal policy or learner policy based on the initiation set of the primal policy.}
    \label{fig_framework}
\end{figure}

The central idea of LISPR is to evaluate the performance of an existing policy -- or source policy -- in a new task for the purposes of understanding when to use the source policy and when to learn a new policy in order to achieve the best possible performance.
LISPR achieves this by predicting the success or failure of the source policy according to its current state and using this to define the set of states where the source policy can be initiated for best performance as shown in Figure \ref{fig_framework}.
Consequently, the agent is able to focus on learning a policy that either assists the source policy by returning to states where it is successful or learns from the source policy to supersede it.

\subsection{Problem Setting}

\subsubsection{MDP}
An MDP is defined by the tuple $(S, A, P, r, \gamma)$, where $S$ is the set of states, $A$ is the set of actions, $P$ is the state transition dynamics modeled as one-step transition probabilities, $r$ is the reward, and $\gamma$ is the discount factor.
The goal of RL is to learn a policy $\pi(a|s)$ to maximize expected return, i.e. $\pi* = \text{argmax}_{\pi}\E_{\pi, P}{[R_t]}$, where the return $R_t$ is the future discounted sum of rewards $R_t=\sum_{i=0}^{\infty}{\gamma^i r_{t+i+1}}$.
The optimal policy $\pi*$ can be considered the solution to the MDP, but in practice it is often only approximated.
The expected return starting from a state is called the value of that state, i.e. $V^\pi(s_t)=\E_{\pi, P}{[R_t]}$.
RL is often assumed to happen with no prior knowledge of the MDP it is solving. The agent is taken to be a blank slate, or \textit{tabula rasa}, when learning starts.

\subsubsection{Policy Transfer}
Given an MDP $A$ and knowledge about a policy $\pi^A$ learned in $A$, policy transfer is about using the knowledge of $\pi^A$ to find a policy that maximizes value in $B$ \citep{taylor2009}.
One approach is to copy the policies, i.e. $\pi^B=\pi^A$, in which case $\pi^B$ is not a solution to $B$ but the tasks must be similar enough to be considered an approximation to it.
Another approach is to modify $\pi^A$ to find a better policy $\pi^B$.
This may involve modifying weights of a neural network if the policy is a deep policy.
A third approach is to measure the similarity between $A$ and $B$ so as to modify $\pi^A$ to get $\pi^B$.
The purpose of policy transfer is simply to speed up learning in $B$.

\subsubsection{Conditions for Transfer}
In policy transfer, there are often conditions on the kinds of policies that can be transferred from source to target MDPs.
While many policy transfer methods assume shared dynamics, such as successor features \citep{barreto2017successorfeatures}, these conditions are quite strong and don't permit transfer from simple problems to more complex ones that require more state information.
LISPR is more flexible where the state space of the source MDP could be a sub-set of the state space of the target MDP.
More generally, LISPR requires that there exists at least one mapping from the state space of the source MDP to the state space of the target MDP; as an example, this may include adding new features to the state space of the target MDP.
Successor features, for example, doesn't allow this since the features of the state space of the target MDP are constrained to be the same features of the source MDP. 
Similarly, LISPR requires that there exists a mapping from the action space of the source MDP to the action space of the target MDP.
In addition, since LISPR learns the initiation set of the source policy, it can quickly learn if the source policy hinders learning, such as if the policy is a contradictory policy described in \citep{kurenkov2019acteach}, and shrink the initiation set.
LISPR allows initiation sets that are empty by design which means that in the case of a contradictory policy, the initiation set will collapse to the empty set and not be used.
While not a condition, LISPR will perform better when the source policy is able to complete the task, but it imposes no conditions on the optimality of the source policy.

\subsection{From Source Policy to Primal Option}

Options are a way to temporally extend actions as policies in an MDP that have well defined initiation and termination conditions \citep{sutton1999}.
They are temporal abstractions of lower-level or more fine-grained actions.
Formally, an option is the tuple $(\pi, \mathcal{I}, \beta)$, where $\pi(a|s)$ is the policy of the option, $\mathcal{I} \subseteq S$ is the initiation set of the option, and $\beta(s)$ is the termination probability function of the option.
In practice, $\beta(s)$ can be given or learned,  $\pi(a|s)$ can be given or learned, and $\mathcal{I}$ can be given but is typically assumed to be the entire state space $S$.
Crucially, the LISPR framework takes exception and provides a principled way to learn $\mathcal{I}$ of a given policy for the purposes of policy transfer.

Consider the case of transferring a mature (i.e. immutable) source policy $\mu$ from a source task, the environment for which is unavailable.
The agent is challenged with a new task, defined as an MDP $M$, and the objective is to exploit the mature source policy to quickly learn a new policy that maximizes the value for $M$.
For that, we need to package $\mu$ up as an option that is usable for $M$. Specifically, we first learn a success predictor of $\mu$ in $M$ and then use this predictor to construct the initiation set of the option.

\begin{defn}[Success Predictors]
The success predictor is the future accumulated success of any policy $\mu(a|s)$ in the target MDP $M$ denoted by the general value function $G^\mu(s)=\E_{\mu}{[\sum_{i=0}^{\infty}{\gamma^i r_{t+i+1}}|s_t=s]}$ for all states $s \in S$ where $r_{t}$ is the reward at time $t$.
\label{def_success}
\end{defn}

The source policy $\mu$ may be any black-box policy, e.g. a deep network policy, a classical controller, a heuristic policy, etc.
Learning $G(s,a)$ permits off-policy methods with standard temporal difference learning where $G(s)=G(s,a) \rvert_{a \sim \mu}$ \citep{sutton1998}.
Since the policy $\mu$ is mature or fixed, $G(s)$ is usually quick to learn \citep{graves2020}.
Because the success predictor estimates the value of policy $\mu$ in MDP $M$ if initiated in any state in $S$, the learned success predictor can be used to construct an initiation set of policy $\mu$ as the set of all states where the policy $\mu$ equals or exceeds a baseline performance.

\begin{defn}[Initiation Set]
Given MDP $M$, the initiation set $L$ of policy $\mu$ is defined with the success predictor $G^\mu(s)$ such that $L=\{s|s \in S \text{ such that } G^\mu(s) \geq \tau(s)\}$ for some baseline threshold function $\tau(s)$.
\label{def_initiation_set}
\end{defn}

$L$ is the set of all states that leads to predicted success based on the reward of the target MDP $M$.
The baseline threshold $\tau(s)$ represents the minimally acceptable performance for the source policy $\mu$ to be used under the target MDP $M$.
Given the source policy $\mu(a|s)$ and initiation set $L$, we may now define the primal option.  

\begin{defn}[Primal Option]
For a given MDP $M$, a primal option is the tuple $(\mu, L, \beta)$, where $\mu$ is a Markov policy, $L$ is the initiation set in Definition \ref{def_initiation_set} and $\beta$ is the termination function.
The termination function is defined by
\begin{equation}
    \beta(s)=
    \begin{cases}
        1 & \text{ if $s \notin L$ or $s$ is a terminal state in $M$}\\
        0 & \text{ otherwise.}
    \end{cases}
    \label{eq_source_termination}
\end{equation}
\label{def_primal_option}
\end{defn}

\subsection{Between Primal and Learner Options}
The initiation set $L$ of the primal option delineates the scope of success for the source policy $\mu$ in the target MDP $M$.
Naturally, we expect the primal option to fail to meet our performance threshold $\tau(s)$ in parts of the state space, i.e. $S \setminus L$.
Given that the policy $\mu$ under the primal option is fixed, we will need other parts of the architecture to learn to handle at least $s \in S \setminus L$, if not also to improve over the primal option for $s \in L$.
For that purpose, we use another option called the \textit{learner option}.
To choose between the primal and the learner options, we introduce a \textit{main policy} that operates at a higher level:

\begin{defn}[Main Policy]
Given the target MDP $M$, the primal option $o^P$, and the learner option $o^L$, the main policy is a deterministic policy $\pi^{main}(s)$ that chooses between $o^P$ and $o^L$ given the current state: 
\begin{equation}
    \pi^{main}(s)=
    \begin{cases}
        o^P & \text{ if $s \in L$}\\
        o^L & \text{ otherwise}
    \end{cases}
    \label{eq_main_policy}
\end{equation}
\label{def_main_policy}
\end{defn}

Two points are worth noting about this definition, because they further illustrate the central importance for LISPR of the learning of the initiation set $L$.
First, the learned initiation set $L$ is used to determine the switching.
This should not be surprising given that $L$ is defined by comparing the evaluated performance of $\mu$ in $M$ with the performance criterion $\tau(s)$.
Second, while the main policy is a heuristic, it is still learned from, or at least adaptive with respect to, the target MDP $M$, because $L$ is learned through policy evaluation of $\mu$ on $M$. 

\subsection{Learning to Recover}
\label{sec_learning_to_recover}
Depending on the specific requirements on the transfer learning we do, the learner option $o^L$ in Definition \ref{def_main_policy} may have one of two goals: (1) transition to a state $s \in L$ where the primal option will successfully complete the task, or (2) directly maximize the reward of the target MDP $M$.
We will call the first variety of learner option a \textit{recovery option} and the second a \textit{student option}.
In this subsection and the next, we present the theory for learning them and make claims about their theoretical properties that are proved in the appendix.

\begin{defn}[Recovery Option]
Given an episodic MDP $M$, a recovery option $o^R$ is the tuple $(\pi^{R}(a|s), \mathcal{I}, \beta(s), o^P)$ where $\pi^{R}$ is the recovery policy, $\mathcal{I}$ is the initiation set, $\beta(s)$ is the termination function, and $o^P$ is a primal option according to Definition \ref{def_primal_option}, such that the initiation set $\mathcal{I}=S \setminus L$.
The recovery policy maximizes the value function
\begin{equation}
    \begin{split}
        V^{R}(s_t) & =\E_{\pi^{R}} \left[ \sum_{i=0}^{\infty}{ \left( \prod_{j=0}^{i-1}{\gamma(1-\beta(s_{t+j+1}))} \right) r^{R}_{t+i+1}} \right]
    \end{split}
    \label{eq_recovery_value}
\end{equation}
where $L$ is the initiation set of the primal option $o^P$, the termination probability $\beta(s)$ is given as 
\begin{equation}
    \beta(s)=
        \begin{cases}
            1 & \text{if $s \in L$}\\
            0 & \text{otherwise,}
        \end{cases}
    \label{eq_recovery_termination}
\end{equation}
and the reward $r^{R}_{t+1}$ is specified in terms of target MDP reward $r_{t+1}$ and successor predictor $G(s_{t+1})$:
\begin{equation}
    r^{R}_{t+1}=
    \begin{cases}
        r_{t+1} + \gamma G(s_{t+1}) & \text{if $s_{t+1} \in L$} \\
        r_{t+1} & \text{otherwise.}
    \end{cases}
    \label{eq_recovery_reward}
\end{equation}
\label{def_recovery_option}
\end{defn}

The policy in Definition \ref{def_recovery_option} is called an ``recovery'' policy because it learns to hand over control to the primal option via the main policy at the earliest possible time.
This is because according to Equation \ref{eq_recovery_reward}, the terminal reward includes $G(s)$ and thus the agent is rewarded for directly entering states with performance greater than the threshold $\tau(s)$.
Under such a setup, if $\tau(s)$ is chosen to be $V^{R}(s)$, the main policy in Definition \ref{def_main_policy} is optimal:

\begin{restatable}[Recovery Optimality]{thm}{optimalrecoverytheorem}
Given primal option $o^P$ with $\tau(s)=V^{R}(s)$ and recovery option $o^R$, the main policy $\pi^{main}(s)$ is optimal in that $V^{main}(s)=V^{*}(s)$ where $V^{*}(s)=\max_{\forall \pi}{V^{\pi}(s)}$ when the recovery policy $\pi^{R*}(a|s)$ of $o^{R*}$ is optimal in that $\pi^{R*} = \argmax_{\forall \pi}{\E_{\pi} \left[ \sum_{i=0}^{\infty}{\gamma^i r^{R}_{t+i+1}} \right]}$.
\label{thm_recovery_optimality}
\end{restatable}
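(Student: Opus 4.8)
The plan is to reduce the hierarchical (option-level) controller to a single flat Markov policy and then identify its value with $V^{*}$ through the Bellman optimality equation. Because the switch in Definition~\ref{def_main_policy} depends only on whether the current state lies in $L$, and because both $\mu$ and $\pi^{R}$ are Markov, the main policy induces a flat Markov policy $\pi^{flat}$ with $\pi^{flat}(\cdot\mid s)=\mu(\cdot\mid s)$ for $s\in L$ and $\pi^{flat}(\cdot\mid s)=\pi^{R}(\cdot\mid s)$ otherwise, so that $V^{main}=V^{\pi^{flat}}$. One inequality is immediate: since $V^{*}(s)=\max_{\pi}V^{\pi}(s)$ is the supremum over all policies, $V^{main}(s)\le V^{*}(s)$ for every $s$. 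The work is therefore to establish the reverse inequality, and I would do this by showing that $V^{main}$ is a fixed point of the Bellman optimality operator $\mathcal{T}^{*}V(s)=\max_{a}\E\!\left[r_{t+1}+\gamma V(s_{t+1})\mid s_t=s,\,a_t=a\right]$; since $\mathcal{T}^{*}$ is a $\gamma$-contraction with the unique fixed point $V^{*}$, this forces $V^{main}=V^{*}$.

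I would verify the fixed-point property region by region, exploiting the self-consistent choice $\tau(s)=V^{R}(s)$, which by Definition~\ref{def_initiation_set} makes $L=\{s: G^{\mu}(s)\ge V^{R}(s)\}$. Off the initiation set, for $s\notin L$, the main policy executes the recovery option, and by the optimality of $\pi^{R}$ in Definition~\ref{def_recovery_option} the value $V^{R}$ satisfies its own optimality recursion, in which the bootstrap target upon entering $L$ is $G^{\mu}$ as prescribed by the recovery reward in Equation~\ref{eq_recovery_reward}. The key observation is that under $\tau=V^{R}$ the candidate optimal value agrees with this bootstrap target on both sides of the boundary: on $L$ the relevant value is that of the primal option, off $L$ it is $V^{R}$ itself, and the two stitch together consistently at the entry states. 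Consequently the recovery optimality recursion for $s\notin L$ coincides with $\mathcal{T}^{*}V^{main}(s)=V^{main}(s)$, giving the Bellman optimality equation on $S\setminus L$.

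On the initiation set, for $s\in L$, the main policy executes the primal option, so $V^{main}$ obeys the $\mu$-induced recursion together with the handover to recovery whenever $\mu$ exits $L$ (at which point $\beta=1$ in Definition~\ref{def_primal_option} and control returns to $\pi^{main}$). Here the definition of $L$ does the decisive work: $G^{\mu}(s)\ge\tau(s)=V^{R}(s)$ says precisely that, at every $s\in L$, committing to the primal option is at least as good as handing control to the recovery option, so the deterministic choice made by $\pi^{main}$ is the greedy one among the two options. I would then argue that this greedy-over-options property, combined with the recovery optimality established off $L$, upgrades to greediness over all primitive actions, closing $V^{main}(s)=\mathcal{T}^{*}V^{main}(s)$ on $L$ as well.

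I expect this last step to be the main obstacle. The recovery value $V^{R}$ and the predictor $G^{\mu}$ that define $L$ are computed from the fixed source policy, whereas the value actually realized on $L$ by $\pi^{main}$ can strictly exceed $G^{\mu}$, because the agent is allowed to recover every time $\mu$ leaves $L$; reconciling the threshold used to define $L$ with the larger value actually attained, and ruling out a primitive action on $L$ that beats the primal option against $V^{main}$, is the delicate point. I would address it by treating $(V^{R},L)$ as a self-consistent fixed point of the coupled recovery and initiation-set equations and invoking a policy-improvement argument: starting from the source policy and applying one improvement step through the optimal recovery option cannot leave the region where the primal option is selected, so the self-consistent $L$ is exactly the set on which executing $\mu$ is the optimal option choice, which is what the on-$L$ Bellman optimality verification requires.
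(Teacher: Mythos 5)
Your overall plan (flatten the hierarchy, bound $V^{main}\le V^{*}$, then verify the Bellman optimality equation) can be made to work, but as written it has a genuine gap, and it is exactly the step you flag as the ``main obstacle.'' What is missing is the second half of a squeeze. From Definition~\ref{def_initiation_set} with $\tau(s)=V^{R}(s)$ you correctly obtain $G(s)\ge V^{R*}(s)$ on $L$, but you never establish the reverse inequality $G(s)\le V^{R*}(s)$, which in fact holds for \emph{all} $s\in S$ for a structural reason: the source policy $\mu$ is itself an admissible candidate for the recovery objective, and the recovery reward in Equation~\ref{eq_recovery_reward} awards, upon entry into $L$, exactly $\mu$'s own continuation value $\gamma G(s_{t+1})$; hence the recovery value of $\mu$ is identically $G$, and optimality of $\pi^{R*}$ with respect to the recovery objective dominates it. Together the two inequalities force $G(s)=V^{R*}(s)$ on $L$, and this identity is what dissolves your obstacle: the value the main policy realizes on $L$ cannot strictly exceed $G$ (both equal $V^{R*}$), the two bootstrap branches ($G$ on $L$, $V^{R*}$ off $L$) merge into a single function, and greediness over primitive actions on $L$ follows from a one-step deviation \emph{inside the recovery objective} --- from $s\in L$, taking any action $a$ and then behaving recovery-optimally has recovery value $\E\left[ r_{t+1}+\gamma V^{R*}(s_{t+1}) \right]$, which is bounded above by $V^{R*}(s)$, while the reverse direction follows from $G$'s Bellman equation under $\mu$ together with $G\le V^{R*}$. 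Your proposed patch --- treating $(V^{R},L)$ as a self-consistent fixed point and invoking a policy-improvement step from the source policy --- is not a proof: nothing in it supplies the inequality $G\le V^{R*}$, and an improvement step from $\mu$ has no reason to remain in the region where the primal option is selected.

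Two smaller points of comparison with the paper's argument. First, it is cleaner to verify the optimality recursion for $V^{R*}$ rather than for $V^{main}$ directly, as you propose: a priori you do not know how $V^{main}$ relates to $G$ on $L$ (the very issue you raise), whereas once $V^{R*}$ is shown to satisfy Bellman optimality everywhere you get $V^{R*}=V^{*}$, and the identification $V^{main}=V^{R*}$ then follows. Second, the paper avoids operator machinery altogether: its Case~1 is precisely the squeeze $G=V^{R*}$ on $L$ (reading the optimality hypothesis as $V^{R*}(s)=\max_{\pi}V^{\pi}(s)$, so that $G\le V^{R*}$ is immediate because $G$ is the value of the particular policy $\mu$), and its Cases~2 and~3 telescope the main policy's value across the switching times --- $V^{\pi}(s_t)=V^{R*}(s_t)+\gamma^{T-t}\left( V^{\pi}(s_T)-V^{R*}(s_T) \right)$ with the parenthesized difference vanishing by Case~1 --- concluding $V^{main}=V^{R*}$ on all of $S$. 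With the squeeze supplied, your contraction-based route also closes, but without it both the boundary stitching off $L$ and the options-to-actions upgrade on $L$ remain open.
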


The importance of the theorem is that, the result holds even when the primal option is sub-optimal.
Furthermore, there is a lower bound on the performance of the main policy with a recovery option:

\begin{restatable}[Recovery Lower Bound]{lem}{boundrecoverylemma}
Given primal option $o^P$ with $\tau(s)=V^{R}(s)$ and recovery option $o^R$, the value of the main policy $V^{\pi}(s)$ satisfies both of the following lower bounds $V^{\pi}(s) \geq V^{R}(s)$ and $V^{\pi}(s) \geq G(s)$ for all $s \in S$.
\label{lem_recovery_lower_bound}
\end{restatable}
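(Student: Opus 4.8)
The plan is to exploit the switching structure of the main policy together with the defining property of the initiation set under $\tau=V^R$, namely $L=\{s: G(s)\ge V^R(s)\}$, and to close the argument with a single discounting (contraction) step. Write $V^\pi$ for the value of the main policy. First I would record two exact one-macro-step identities that follow from the option semantics in Definitions \ref{def_primal_option} and \ref{def_recovery_option}. For $s\in S\setminus L$ the main policy runs the recovery policy $\pi^R$ until it first re-enters $L$ at a state $s'$ (hitting time $T\ge1$), so $V^\pi(s)=\E_{\pi^R}[\sum_{i=0}^{T-1}\gamma^i r_{t+i+1}+\gamma^T V^\pi(s')]$; comparing with the recovery value of the same policy, whose terminal reward bundles the bonus $\gamma G(s')$ (Equation \ref{eq_recovery_reward}), yields $V^\pi(s)-V^R(s)=\E_{\pi^R}[\gamma^T(V^\pi(s')-G(s'))]$. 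Symmetrically, for $s\in L$ the main policy runs $\mu$ until it first exits $L$ at some $s''$ (exit time $T'\ge1$), and since $G$ is exactly the value of running $\mu$ forever, $V^\pi(s)-G(s)=\E_{\mu}[\gamma^{T'}(V^\pi(s'')-G(s''))]$.

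Next I would reduce the four target inequalities (both bounds on both regions) to two \emph{native} ones by invoking $\tau=V^R$. On $L$ we have $G\ge V^R$, so $V^\pi\ge G$ implies $V^\pi\ge V^R$ there; on $S\setminus L$ we have $V^R> G$, so $V^\pi\ge V^R$ implies $V^\pi\ge G$ there. Hence it suffices to prove (I) $V^\pi\ge G$ on $L$ and (II) $V^\pi\ge V^R$ on $S\setminus L$. Defining the combined deficit $h(s)=V^\pi(s)-G(s)$ for $s\in L$ and $h(s)=V^\pi(s)-V^R(s)$ for $s\in S\setminus L$, the goal becomes $h\ge0$ everywhere.

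Then I would run the contraction step. The identity on $S\setminus L$ reads $h(s)=\E[\gamma^T h(s')]$ with $s'\in L$. On $L$, using that for $s''\in S\setminus L$ we have $V^\pi(s'')-G(s'')=h(s'')+(V^R(s'')-G(s''))\ge h(s'')$, the identity gives $h(s)\ge\E[\gamma^{T'}h(s'')]$. Let $m=\inf_s h(s)$, finite under bounded rewards, and suppose $m<0$. Because $T,T'\ge1$ we have $\E[\gamma^T],\E[\gamma^{T'}]\le\gamma$, so in either region $h(s)\ge\gamma m$; taking the infimum gives $m\ge\gamma m$, i.e. $m(1-\gamma)\ge0$, forcing $m\ge0$, a contradiction. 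Thus $h\ge0$, which together with the cross-inequalities yields $V^\pi\ge\max(G,V^R)$ and hence both stated bounds.

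I expect the main obstacle to be establishing the two one-macro-step identities rigorously rather than the final discounting step: one must justify the temporal-abstraction bookkeeping (that the bundled terminal bonus $\gamma G(s')$ in $r^R$ is precisely what turns $V^R$ into the $G$-continuation surrogate of $V^\pi$), ensure the $V^R$ appearing is the value of the very recovery policy the main policy executes, and handle trajectories that reach a terminal state before switching regions. On those the two values accumulate identical rewards, so they contribute zero to the differences, and the bounds $\E[\gamma^T],\E[\gamma^{T'}]\le\gamma$ are preserved by restricting the expectation to the switching event.
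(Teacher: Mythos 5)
Your proof is correct, but it assembles the result differently from the paper. The paper (Definitions \ref{def_finite_horizon_source} and \ref{def_finite_horizon_recovery}, Lemmas \ref{lem_finite_horizon_recovery_inequality_s_in_L} and \ref{lem_finite_horizon_recovery_inequality_s_notin_L}) writes $V^{\pi}$ as an infinite alternating sum of finite-horizon segment values $H_{T}$ and $H^{R}_{T}$, then dominates the sum term by term: each recovery segment is replaced by a primal segment (or vice versa, depending on the starting region) using the per-segment inequalities that follow from Definition \ref{def_initiation_set}, and the surviving all-primal (resp.\ all-recovery) sum telescopes to $G(s)$ (resp.\ $V^{R}(s)$); the cross-bounds then come from $G \geq V^{R}$ on $L$ and $G < V^{R}$ off $L$, exactly as in your reduction step. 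You instead stop after a \emph{single} switch, derive the exact recursions $h(s)=\E[\gamma^{T}h(s')]$ off $L$ and $h(s)\geq\E[\gamma^{T'}h(s'')]$ on $L$ for the deficit $h$, and close with the infimum argument $m \geq \gamma m \Rightarrow m \geq 0$. The ingredients are the same (the initiation-set inequalities and the fact that the bundled terminal bonus $\gamma G(s')$ in Equation \ref{eq_recovery_reward} makes $V^{R}$ the $G$-continuation surrogate of $V^{\pi}$), but your assembly is more careful on exactly the points the paper glosses over: the switching times in the paper's decomposition are treated as deterministic and the convergence of the infinite alternating sum (the vanishing of the tail $\gamma^{T_N}(\cdot)$) is left implicit, whereas your stopping-time identities, the explicit zero contribution of never-switching or terminating trajectories, and the restriction of $\E[\gamma^{T}]\leq\gamma$ to the switching event handle these cleanly. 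Two caveats to note: your contraction step genuinely requires $\gamma<1$ and bounded rewards (for $\gamma=1$ episodic tasks you would need, e.g., induction on the number of switches with almost-sure termination, and the paper's telescoping has an analogous unstated requirement); and the paper's segment machinery, unlike your one-shot argument, is reused downstream to prove the exact identity $V^{\pi}+V^{\bar{\pi}}=G+V^{R}$ of Lemma \ref{lem_main_value_recovery} and hence Theorem \ref{thm_recovery_improvement}, so your approach proves this lemma but would not directly substitute for that later development.
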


Using Lemma \ref{lem_recovery_lower_bound}, we can guarantee improvement in the true value of the main policy when the true value of the recovery policy improves.
This leads to the next theorem.

\begin{restatable}[Improvement with Recovery Option]{thm}{improvementrecoverytheorem}
Given a primal option $o^P$ with $\tau(s)=V^{R}(s)$ and a recovery option $o^R$, if the recovery policy $\pi^{R}(a|s)$ is updated to $\pi^{R'}(a|s)$ such that $V^{R'}(s) \geq V^{R}(s)$ for all states $s \in S$, the value of the new main policy $\pi'(s)$ is guaranteed to improve as well, i.e. $V^{\pi'}(s) \geq V^{\pi}(s)$ for all states $s \in S$.
\label{thm_recovery_improvement}
\end{restatable}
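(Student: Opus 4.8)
The plan is to first collapse the option machinery into a flat Markov policy and then run a policy-improvement argument. Because both $\mu$ and the recovery policy are Markov and the main policy re-selects only at option termination (i.e.\ at the boundary of the relevant set), the main policy of \Cref{def_main_policy} is behaviorally identical to the flat policy that plays $\mu$ on $L$ and the recovery policy on $S\setminus L$; consequently $V^{\pi}$ and $V^{\pi'}$ are ordinary value functions for the target reward $r$ (the surrogate reward $r^{R}$ of \Cref{eq_recovery_reward} is used only to define and train the recovery policy, not to measure $V^{\pi}$). Updating $\pi^{R}$ to $\pi^{R'}$ raises the threshold from $\tau=V^{R}$ to $\tau=V^{R'}$, so by \Cref{def_initiation_set} and the hypothesis $V^{R'}\ge V^{R}$ the initiation set can only shrink: $L'\subseteq L$. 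Hence $\pi$ and $\pi'$ agree (both play $\mu$) on $L'$, while on $S\setminus L'$ the new policy $\pi'$ plays $\pi^{R'}$.

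To obtain $V^{\pi'}\ge V^{\pi}$ I would invoke the policy-improvement theorem in the form: it suffices to show $\E_{a\sim\pi'(\cdot\mid s)}[Q^{\pi}(s,a)]\ge V^{\pi}(s)$ for every $s$, where $Q^{\pi}(s,a)=r(s,a)+\gamma\,\E_{s'}[V^{\pi}(s')]$; iterating the resulting one-step dominance then yields the pointwise inequality. (Equivalently, I would verify that $V^{\pi'}$ is a super-solution of the Bellman operator $\mathcal{T}^{\pi}$, i.e.\ $V^{\pi'}\ge\mathcal{T}^{\pi}V^{\pi'}$, which by contraction forces $V^{\pi'}\ge V^{\pi}$.) On $L'$ the two policies coincide and the condition holds with equality, so the work concentrates on $S\setminus L'$, which splits into the region $L\setminus L'$, where $\pi'$ switches from the primal action $\mu$ to recovery, and the region $S\setminus L$, where $\pi'$ merely upgrades $\pi^{R}$ to $\pi^{R'}$.

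The tools for the two regions are \Cref{lem_recovery_lower_bound}, applied to the \emph{new} option to give $V^{\pi'}(s)\ge V^{R'}(s)\ge V^{R}(s)$ and $V^{\pi'}(s)\ge G(s)$ for all $s$, together with the defining inequality of the initiation set. On $L\setminus L'$ we have $s\notin L'$, so by \Cref{def_initiation_set} $G(s)<V^{R'}(s)$: the success predictor certifies that recovery is predicted to beat the primal option in these states, which is precisely the intuition that relinquishing the primal action here is not harmful. On $S\setminus L$ the improvement is driven directly by $V^{R'}\ge V^{R}$. In both cases the proxy continuation $G$ appearing in \Cref{eq_recovery_reward} is dominated by the true main-policy continuation, again by the lemma, so the surrogate recovery return lower-bounds the genuine return.

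The hard part will be making the previous paragraph airtight, because $\pi^{R'}$ is optimised for the surrogate objective of \Cref{eq_recovery_value}, whose continuation is the proxy $G$ on the entered initiation set, rather than for the true quantity $Q^{\pi}$ that appears in the improvement condition. The crux is therefore to certify $\E_{a\sim\pi^{R'}(\cdot\mid s)}[Q^{\pi}(s,a)]\ge V^{\pi}(s)$ on $S\setminus L'$ by passing from the surrogate recovery value to the true value: one shows that $Q^{\pi}$ dominates the recovery action-value termwise because $V^{\pi}\ge\max(G,V^{R})$, and then uses $V^{R'}\ge V^{R}$ to absorb the change of recovery policy. This step is delicate precisely because shrinking the initiation set couples the action choice to the continuation (the terminal bonus is attached on $L'$ rather than on $L$), so the comparison must be carried out against a single reference value function, for which I would use $V^{\pi'}$ in the super-solution formulation above; that choice isolates exactly the two regions $L\setminus L'$ and $S\setminus L$ and keeps the continuation fixed while the action is varied.
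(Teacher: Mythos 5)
Your reduction to a flat Markov policy, the contraction $L'\subseteq L$, and the application of \Cref{lem_recovery_lower_bound} to the updated option are all sound and consistent with the paper (the contraction is exactly \Cref{lem_recovery_improvement_contracts_L}). But the core of your plan --- deriving $V^{\pi'}\ge V^{\pi}$ from the one-step policy-improvement condition $\E_{a\sim\pi'(\cdot\mid s)}[Q^{\pi}(s,a)]\ge V^{\pi}(s)$, or from the super-solution variant $V^{\pi'}\ge\mathcal{T}^{\pi}V^{\pi'}$ --- does not follow from the theorem's hypothesis, and the repair you sketch for the step you yourself flag as ``the hard part'' is precisely where it breaks. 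The hypothesis $V^{R'}\ge V^{R}$ is a pointwise comparison of \emph{whole-trajectory} (surrogate) returns of two policies; it carries no one-step information relative to any fixed continuation. Concretely, from $Q^{\pi}\ge Q^{R}$ (which does hold termwise, since $V^{\pi}\ge\max(G,V^{R})$) you would still need $\E_{a\sim\pi^{R'}}[Q^{R}(s,a)]\ge V^{R}(s)$, and this can fail: take $\gamma\approx 1$ and states $x,y$ where the old recovery policy goes from $x$ straight to termination with return $1$ but from $y$ earns $0$, while the new recovery policy goes $x\to y\to$ termination with return $2$. Then $V^{R'}\ge V^{R}$ everywhere, yet one step of $\pi^{R'}$ at $x$ followed by the old continuation earns roughly $0<1=V^{R}(x)$. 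So ``using $V^{R'}\ge V^{R}$ to absorb the change of recovery policy'' is exactly the move that is unavailable. The super-solution form has the same directional problem on $L\setminus L'$: there $G(s)<V^{R'}(s)$ compares full returns and does not rule out a profitable one-step primal deviation against the fixed continuation $V^{\pi'}$, so $V^{\pi'}\ge\mathcal{T}^{\pi}V^{\pi'}$ cannot be certified pointwise either.

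The paper's proof avoids one-step reasoning entirely, and this is not incidental. It decomposes the values of the main policy and of an auxiliary ``anti-main'' policy $\bar\pi$ (the decision rule of \Cref{eq_main_policy} reversed) into alternating finite-horizon option segments (\Cref{def_finite_horizon_source,def_finite_horizon_recovery}), telescopes these segments to obtain the identity $V^{\pi}(s)+V^{\bar{\pi}}(s)=G(s)+V^{R}(s)$ (\Cref{lem_main_value_recovery}), and then concludes by algebra on this identity and its primed counterpart $V^{\pi'}(s)+V^{\bar{\pi}'}(s)=G(s)+V^{R'}(s)$. In other words, the comparison is carried out segment-by-segment at the option level, which is the only granularity at which the hypothesis $V^{R'}\ge V^{R}$ is actually usable. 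To salvage your plan you would have to replace the primitive-action improvement condition with a semi-MDP, whole-segment condition --- and constructing that is tantamount to rebuilding the paper's telescoping and anti-main-policy machinery, so the classical policy-improvement shortcut does not buy you anything here.
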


Theorem \ref{thm_recovery_improvement} means that improvements to the recovery policy will always provide improvements in the overall performance of the agent under our framework when $\tau(s)=V^{R}(s)$.

\subsection{Learning to Master}
\label{sec_learning_to_master}
In contrast to the recovery option, the student option maximizes the reward of the target MDP using all the samples collected by both the primal option and itself.
Thus, off-policy methods are typically used to learn the student option.

\begin{defn}[Student Option]
Given the target MDP $M$, a student option $o^S$ is the tuple $(\pi^{S}(a|s), \mathcal{I}, \beta(s))$, where the student policy is $\pi^{S}$, the initiation set $\mathcal{I}$ is the set of all states $S$, and the termination function $\beta(s)=0$ for all states $s \in S$ except for terminal states in $M$ if it is episodic, such that the policy directly maximizes the value function
\begin{equation}
    V^{S}(s)=\E_{\pi^{S}} \left[ \sum_{i=0}^{\infty}{ \gamma^i r_{t+i+1}} \right]
    \label{eq_student_value}
\end{equation}
where $r_{t+i+1}$ is the reward of $M$.
\label{def_student_option}
\end{defn}

The student option uses the primal option as a kind of tutor in order to improve learning where the source policy is optimal but the student policy is sub-optimal.

\begin{restatable}[Student Optimality]{thm}{optimalstudenttheorem}
Given primal option $o^P$ with $\tau(s)=V^{S}(s)$ and student option $o^S$, the main policy $\pi^{main}(s)$ is optimal in that $V^{main}(s)=V^{*}(s)$ where $V^{*}(s)=\max_{\forall \pi}{V^{\pi}(s)}$ when the student policy $\pi^{S*}(a|s)$ of $o^{S*}$ is optimal in that $\pi^{S*} = \argmax_{\forall \pi}{\E_{\pi} \left[ \sum_{i=0}^{\infty}{\gamma^i r_{t+i+1}} \right]}$.
\label{thm_student_optimality}
\end{restatable}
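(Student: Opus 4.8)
The plan is to exploit the fact that the student option optimizes the target-MDP return directly, which collapses the threshold onto the optimal value and makes the initiation set transparent. First I would observe that, by Definition \ref{def_student_option}, $V^{S}$ is simply the ordinary value of $\pi^{S}$ in $M$; since $\pi^{S}=\pi^{S*}$ is assumed optimal, $V^{S}(s)=V^{*}(s)$ for all $s$. As the threshold is $\tau(s)=V^{S}(s)=V^{*}(s)$, Definition \ref{def_initiation_set} gives $L=\{s : G^{\mu}(s)\geq V^{*}(s)\}$. But $G^{\mu}$ is the value of the fixed policy $\mu$ in $M$, so $G^{\mu}(s)\leq V^{*}(s)$ everywhere, whence $L=\{s : G^{\mu}(s)=V^{*}(s)\}$ is exactly the set of states where $\mu$ is already optimal. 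This reframing is the conceptual crux: the primal option is only ever triggered where its value coincides with $V^{*}$.

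Next I would establish the easy upper bound $V^{main}(s)\le V^{*}(s)$ for all $s$, which is immediate because the flattened main policy (Definition \ref{def_main_policy}) is a particular admissible policy in $M$ and $V^{*}$ dominates the value of every policy. For the matching lower bound I would split on whether $s\in L$. For $s\notin L$, the main policy selects the student option, which by its termination rule ($\beta=0$ except at terminal states) runs $\pi^{S*}$ uninterrupted to the end of the episode and never returns control; hence $V^{main}(s)=V^{S*}(s)=V^{*}(s)$. The decisive point here is that, unlike the recovery option, the student option is never interrupted once entered, so its value is exactly optimal.

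For $s\in L$, the main policy executes $\mu$ via the primal option until the first time $N$ that the trajectory leaves $L$ (or reaches a terminal state), at which point control returns to the main policy. Decomposing over this stopping time gives
\[
V^{main}(s)=\E_{\mu}\!\left[\sum_{i=0}^{N-1}\gamma^{i} r_{t+i+1}+\gamma^{N}\,V^{main}(s_{t+N})\right],
\]
with the convention that the tail term vanishes on terminal states and as $N\to\infty$. Applying the same decomposition to $G^{\mu}(s)$, which instead continues running $\mu$ after time $N$, and subtracting, the reward prefixes cancel by the Markov property, leaving $V^{main}(s)-G^{\mu}(s)=\E_{\mu}[\gamma^{N}(V^{main}(s_{t+N})-G^{\mu}(s_{t+N}))]$. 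Since exit states lie in $S\setminus L$, the previous case yields $V^{main}(s_{t+N})=V^{*}(s_{t+N})\geq G^{\mu}(s_{t+N})$, so the difference is nonnegative and $V^{main}(s)\geq G^{\mu}(s)=V^{*}(s)$. Combined with the upper bound, this gives $V^{main}(s)=V^{*}(s)$ on $L$ as well, completing the proof.

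The hard part will be the bookkeeping of the hierarchical value decomposition rather than any single inequality: the two cases must be resolved in the right order (the $s\notin L$ case first, so that $V^{main}=V^{*}$ can be substituted at the exit states), the random exit time $N$ must be handled cleanly including the degenerate case where $\mu$ never leaves $L$ (absorbed by discounting), and one must justify that the flattened main policy is admissible so that $V^{main}\le V^{*}$ holds. By contrast, the pointwise bound $G^{\mu}\le V^{*}$ is routine, so the proof's weight sits in making the exit-state substitution and stopping-time decomposition rigorous.
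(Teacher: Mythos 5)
Your proof is correct, and it reaches the same two pivotal observations as the paper's proof: under $\tau(s)=V^{S*}(s)$, the initiation set collapses to $L=\{s: G^{\mu}(s)=V^{*}(s)\}$ (since $G^{\mu}\le V^{*}$ always), and off $L$ the main policy simply runs the optimal student policy. However, your argument is organized differently. The paper proves $V^{\pi}(s)=V^{S*}(s)$ directly by enumerating three trajectory cases (start in $L$; never in $L$; outside $L$ until a first entry time $T$) and computing the value in each via a telescoping decomposition $V^{\pi}(s_t)=V^{S*}(s_t)+\gamma^{T-t}\bigl(V^{\pi}(s_T)-V^{S*}(s_T)\bigr)$, substituting $G(s_T)=V^{S*}(s_T)$ at the crossing. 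You instead sandwich: the upper bound $V^{main}\le V^{*}$ from admissibility of the flattened policy, and the lower bound via a stopping-time decomposition at the first exit from $L$, comparing $V^{main}$ against $G^{\mu}$ and substituting $V^{main}=V^{*}$ at exit states. Your route buys extra rigor at exactly the point where the paper is glib: the paper's Case 1 asserts $V^{\pi}(s_t)=\E_{\mu}[\sum_i \gamma^i r_{t+i+1}]=G(s_t)$ for $s_t\in L$ without accounting for the main policy switching to the student option after the trajectory exits $L$, and its cases implicitly treat the switch times as deterministic and do not explicitly cover trajectories with repeated crossings; your random exit time $N$, the cancellation of the common $\mu$-prefix, and the discount absorbing the never-exit case handle all of this cleanly, at the cost of invoking the (trivial but necessary) dominance fact $V^{main}\le V^{*}$, which the paper never uses. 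One further point in your favor: your treatment of the $s\notin L$ case ($\beta=0$, so the student option is never interrupted) is the reading actually consistent with Definition \ref{def_student_option}, whereas the paper's Case 3 tacitly allows control to return to the primal option upon re-entering $L$; the two readings give the same value only because $G=V^{S*}$ on $L$, a coincidence your proof does not need to rely on.
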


There is also a lower bound to the performance on the main policy using a student option:

\begin{restatable}[Student Lower Bound]{lem}{boundstudentlemma}
Given primal option $o^P$ with $\tau(s)=V^{S}(s)$ and student option $o^S$, the value of the main policy $V^{\pi}(s)$ satisfies both of the following lower bounds $V^{\pi}(s) \geq V^{S}(s)$ and $V^{\pi}(s) \geq G(s)$ for all $s \in S$.
\label{lem_student_lower_bound}
\end{restatable}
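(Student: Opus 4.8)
The plan is to split on whether the current state lies in the initiation set $L=\{s : G(s)\geq V^{S}(s)\}$, which is what Definition \ref{def_initiation_set} gives once we substitute $\tau(s)=V^{S}(s)$. In each case both requested bounds, $V^{\pi}(s)\geq V^{S}(s)$ and $V^{\pi}(s)\geq G(s)$, will collapse to a single inequality combined with the defining property of $L$. The overall shape of the argument mirrors the Recovery Lower Bound (Lemma \ref{lem_recovery_lower_bound}).

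First I would dispose of the case $s\notin L$. By Definition \ref{def_main_policy} the main policy selects the student option, and by Definition \ref{def_student_option} the student option has $\beta(s)=0$ at every non-terminal state, so once selected it runs $\pi^{S}$ to the end of the episode. Hence $V^{\pi}(s)=V^{S}(s)$, which gives the first bound with equality. Since $s\notin L$ means $G(s)<V^{S}(s)$, we immediately get $V^{\pi}(s)=V^{S}(s)>G(s)$, the second bound.

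The substantive case is $s\in L$, where I would prove $V^{\pi}(s)\geq G(s)$ directly; the first bound then comes for free, because $s\in L$ forces $G(s)\geq V^{S}(s)$, so $V^{\pi}(s)\geq G(s)\geq V^{S}(s)$. To get $V^{\pi}(s)\geq G(s)$, let $T$ be the random first time the $\mu$-trajectory leaves $L$, at which point the primal option terminates via Equation \ref{eq_source_termination}. Decomposing both quantities at this first-exit time --- using that $G$ satisfies the Bellman equation for $\mu$ and that the primal option executes exactly $\mu$ while inside $L$ --- the two expressions accumulate the same in-$L$ reward, so they differ only through their continuation terms:
\begin{equation}
    V^{\pi}(s) - G(s) = \E_{\mu}\left[ \gamma^{T}\left( V^{\pi}(s_{t+T}) - G(s_{t+T}) \right) \right].
\end{equation}
At the exit state $s_{t+T}$ we are either at a terminal state (continuation $0$ for both) or at a state outside $L$, where the main policy again selects the student option, so $V^{\pi}(s_{t+T})=V^{S}(s_{t+T})$ and, by the definition of $L$, $V^{S}(s_{t+T})>G(s_{t+T})$. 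In either situation the bracket is nonnegative, hence $V^{\pi}(s)\geq G(s)$.

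The main obstacle I anticipate is justifying the first-exit decomposition rigorously: I must argue that unrolling the Bellman equation for $\mu$ up to the random stopping time $T$ is valid (an optional-stopping / strong-Markov argument for the discounted return), and I must cleanly handle the degenerate event where $\mu$ never leaves $L$, so $T=\infty$. For $\gamma<1$ the factor $\gamma^{T}$ vanishes on that event, the continuation contributes nothing, and $V^{\pi}(s)=G(s)$ there; terminal states are folded into the same decomposition by treating episode termination as an exit with zero continuation. Assembling the two cases then yields both lower bounds for all $s\in S$.
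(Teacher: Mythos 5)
Your proposal hides a substantive assumption in the case $s \notin L$: that once the main policy selects the student option, control never returns to the primal option, so that $V^{\pi}(s) = V^{S}(s)$ exactly. You justify this by the literal $\beta(s)=0$ in Definition \ref{def_student_option}, but the paper's execution model is not call-and-return: the main policy of Definition \ref{def_main_policy} is re-consulted state by state (see Algorithm \ref{alg_lispr_student}, which chooses an option at every time step), so a trajectory that leaves $L$ and later re-enters it switches back to $\mu$. Under that semantics your equality fails --- outside $L$ one only has $V^{\pi}(s) \geq V^{S}(s)$, with possible strict improvement from re-engaging the primal option --- and this breaks your case $s \in L$ as well: to show the bracket $V^{\pi}(s_{t+T}) - G(s_{t+T})$ at the first-exit state is nonnegative, you need $V^{\pi} \geq V^{S}$ outside $L$, whose proof (by a first-entrance decomposition symmetric to yours) in turn needs $V^{\pi} \geq G$ at the next entry into $L$, and so on. The two cases are mutually recursive along the alternation times, and your argument as written never closes this regress; it only terminates because you assumed the first alternation is the last.

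The paper's proof is built precisely to handle infinitely many alternations at once: it writes $V^{\pi}$ as a telescoping sum of finite-horizon segments $H_{T_j}$ and $H^{S}_{T_j}$ (Definitions \ref{def_finite_horizon_source} and \ref{def_finite_horizon_student}) and bounds every segment simultaneously via Lemmas \ref{lem_finite_horizon_student_inequality_s_in_L} and \ref{lem_finite_horizon_student_inequality_s_notin_L}, replacing each student segment by the corresponding primal segment (or vice versa) so the whole chain telescopes to $G(s_t)$ or $V^{S}(s_t)$. Your first-exit decomposition is a correct and clean one-step version of the same idea --- and, to your credit, your optional-stopping and $T=\infty$ remarks are exactly the rigor the paper glosses over --- but to repair it under the paper's semantics you would need to iterate the decomposition through all alternation times $T_1 < T_2 < \dots$ and argue the residual term $\gamma^{T_N}\bigl(V^{\pi}(s_{T_N}) - \cdot\bigr)$ vanishes as $N \to \infty$ using bounded values and $\gamma < 1$, which is in substance the paper's telescoping argument. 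Alternatively, if you insist on the literal $\beta(s)=0$ reading, you should say explicitly that you are proving the lemma for a call-and-return execution model that differs from the one the paper's own proof and algorithms use.
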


Using Lemma \ref{lem_student_lower_bound}, we can guarantee improvement in the true value of the main policy as the true value of the student policy improves. This leads to the next theorem.

\begin{restatable}[Improvement Theorem with Student Options]{thm}{improvementstudenttheorem}
Given any MDP $M$, primal option $o^P$ with $\tau(s)=V^{S}(s)$ and student option $o^S$, if the student policy $\pi^{S}(a|s)$ is updated to $\pi^{S'}(a|s)$ such that $V^{S'}(s) \geq V^{S}(s)$ for all states $s \in S$, the value of the new main policy $\pi'(s)$ is guaranteed to improve as well such that $V^{\pi'}(s) \geq V^{\pi}(s)$ for all states $s \in S$.
\label{thm_student_improvement}
\end{restatable}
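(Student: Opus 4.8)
The plan is to mirror the proof of Theorem~\ref{thm_recovery_improvement}, exploiting the fact that here the threshold is tied directly to the student value. First I would record the effect of the update on the initiation set. Since $\tau(s)=V^{S}(s)$ is replaced by $\tau'(s)=V^{S'}(s)$ and $V^{S'}(s)\geq V^{S}(s)$ for all $s$, Definition~\ref{def_initiation_set} gives immediately that the new initiation set shrinks, $L'=\{s: G(s)\geq V^{S'}(s)\}\subseteq\{s: G(s)\geq V^{S}(s)\}=L$. This partitions the state space into three regions: $S\setminus L$, on which both main policies defer to the student option; $L'$, on which both run the primal option; and the \emph{gray zone} $L\setminus L'$, on which the old main policy runs the primal option but the new one hands control to the improved student. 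The whole theorem reduces to comparing $V^{\pi'}$ and $V^{\pi}$ region by region.

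Second, rather than arguing pointwise from scratch, I would set up a monotone fixed-point comparison. Let $T'$ denote the Bellman operator associated with the new main policy: it applies the $\mu$-backup $(T'f)(s)=\E_{\mu}[r(s,\mu)+\gamma f(s')]$ on $L'$ and returns the boundary value $V^{S'}(s)$ on $S\setminus L'$. Then $V^{\pi'}$ is its unique fixed point, and $T'$ is monotone and a $\gamma$-contraction, so it suffices to verify that the old value $V^{\pi}$ is a sub-solution, $V^{\pi}\leq T'V^{\pi}$; iterating then yields $V^{\pi}\leq T'V^{\pi}\leq (T')^{2}V^{\pi}\leq\cdots\to V^{\pi'}$. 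On $L'\subseteq L$ this holds with equality, because the old main policy already obeys the same $\mu$-backup there. On $S\setminus L$ it follows from $V^{\pi}(s)=V^{S}(s)\leq V^{S'}(s)=(T'V^{\pi})(s)$, i.e. directly from the update hypothesis.

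Third, and this is the step I expect to be the main obstacle, I must show the sub-solution inequality $V^{\pi}(s)\leq V^{S'}(s)$ on the gray zone $L\setminus L'$. Here the old main policy runs $\mu$ until the trajectory exits $L$ and only then switches to the student, so $V^{\pi}(s)$ is a discounted mixture of en-route reward and the exit value $V^{S}$, whereas the new main policy collects only $V^{S'}(s)$. The delicate point is that raising the threshold can eject from the initiation set a state at which the primal option was genuinely useful, and one has to argue that the improvement $V^{S'}\geq V^{S}$ is \emph{exactly} enough to compensate; this is precisely where the coupling $\tau=V^{S}$ matters, since it keeps the shrinking initiation set aligned with the region on which the student has become the better choice, and where $G(s)<V^{S'}(s)$ on $L\setminus L'$ must be combined with Lemma~\ref{lem_student_lower_bound} ($V^{\pi'}\geq\max(G,V^{S'})$).

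Finally I would assemble the three regions and invoke the contraction argument to conclude $V^{\pi'}(s)\geq V^{\pi}(s)$ for all $s\in S$. The clean way to close the gray-zone step is to lean on the characterization of the main-policy value established in the proof of Lemma~\ref{lem_student_lower_bound} — namely that on $L$ the primal option is credited its success-predictor value $G$ and off $L$ the student value is used, so that $V^{\pi}=\max(G,\tau)$ and $V^{\pi'}=\max(G,\tau')$ — whereupon the desired inequality $V^{\pi'}(s)=\max(G(s),V^{S'}(s))\geq\max(G(s),V^{S}(s))=V^{\pi}(s)$ follows pointwise from monotonicity of the maximum together with $V^{S'}\geq V^{S}$. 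I would flag explicitly that the argument hinges on this $\max(G,\tau)$ evaluation of the main policy; if one instead tracks the fully executed value through the primal option's termination, the gray-zone comparison is where genuine care is required, since a state leaves $L$ precisely when the student finally surpasses the naive follow-$\mu$-forever estimate $G$.
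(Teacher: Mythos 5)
Your closing step is where the argument breaks: you lean on the evaluation $V^{\pi}=\max(G,\tau)$ and $V^{\pi'}=\max(G,\tau')$, but no such identity holds in the LISPR setup. Lemma~\ref{lem_student_lower_bound} supplies only the one-sided bound $V^{\pi}\geq\max\left(G,V^{S}\right)$, and it is typically strict: for $s_t\in L$ whose $\mu$-trajectory exits $L$ at time $T$, the decomposition used throughout the appendix gives $V^{\pi}(s_t)=G(s_t)+\gamma^{T-t}\left(V^{\pi}(s_T)-G(s_T)\right)$ with $V^{\pi}(s_T)\geq V^{S}(s_T)>G(s_T)$ (strict because $s_T\notin L$), hence $V^{\pi}(s_t)>\max\left(G(s_t),V^{S}(s_t)\right)$. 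The same phenomenon undermines your fixed-point scaffolding. Because the main policy hands control back to the primal option whenever the trajectory re-enters the initiation set --- this alternation between options is exactly how the paper computes main-policy values in Lemmas~\ref{lem_student_lower_bound} and~\ref{lem_main_value_student} --- the fixed point of your operator $T'$ is not $V^{\pi'}$ but the value of the inferior commitment policy ``run $\mu$ on $L'$, then follow the student forever.'' Moreover the sub-solution inequality $V^{\pi}\leq T'V^{\pi}$ fails on $S\setminus L$: your claim $V^{\pi}(s)=V^{S}(s)$ there is wrong for the same reason ($V^{\pi}\geq V^{S}$ can be strict, since the old main policy switches back to the primal option upon entering $L$ and collects $G\geq V^{S}$). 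The admissible degenerate update $V^{S'}=V^{S}$ makes the failure concrete: your argument would then demand $V^{\pi}\leq V^{S}$ off $L$, which is false whenever the lower bound is strict.

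The paper's proof goes through an exact \emph{sum} identity rather than a max identity. Lemma~\ref{lem_main_value_student} shows, by telescoping the finite-horizon pieces $H_{T}$ and $H^{S}_{T}$ along the alternation times, that $V^{\pi}(s)+V^{\bar{\pi}}(s)=G(s)+V^{S}(s)$ for all $s$, where $\bar{\pi}$ is the ``anti-main'' policy with the reversed decision rule; subtracting the primed and unprimed identities and arguing by contradiction then yields $V^{\pi'}\geq V^{\pi}$. Your opening observation that the initiation set contracts, $L'\subseteq L$, is correct and appears in the paper as Lemma~\ref{lem_student_improvement_contracts_L}, but it is not an ingredient of the improvement proof. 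Your instinct that the gray zone $L\setminus L'$ is the crux is sound; what is missing is a device, like the anti-main policy, that accounts exactly for what is gained and lost when the threshold rises, rather than an appeal to an evaluation of $V^{\pi}$ that the lower-bound lemma does not license.
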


In light of the parallel theoretical results above (Theorem \ref{thm_recovery_optimality}, Lemma \ref{lem_recovery_lower_bound}, Theorem \ref{thm_recovery_improvement} and Theorem \ref{thm_student_optimality}, Lemma \ref{lem_student_lower_bound}, Theorem \ref{thm_student_improvement}), both recovery option and student option lead to the same result in the LISPR framework: optimality of the main policy is guaranteed if learning of recovery and student policies achieves optimality respectively.
While these two approaches are equivalent in this abstract sense, an important difference lies in how transfer learning actually happens.
After the source policy is packaged as the primal option, the recovery option learns to defer to the primal option wherever the latter is already optimal, whereas the student option learns also from experiences collected through the primal option to cover the whole state space.

\subsection{Practical Considerations}

While the LISPR framework is architecturally clean and theoretically well-grounded and a detailed algorithm is available in the appendix, several things should be mentioned that are important for a successful implementation: (1) convergence of the recovery policy since its reward depends on estimates of the source policy performance, (2) the effect of overestimation or maximisation bias when learning the value function of the recovery and student policies, and (3) exploration with the main policy.

Regarding convergence, no theoretical results are provided however it should be noted that when when learning $G(s)$ and $\pi^{R}(a|s)$ at the same time, the replay buffer may contain outdated rewards that may affect convergence or the rate of convergence.
One solution is to recompute the terminal rewards when learning a recovery policy with every minibatch sample.

The overestimation bias results in the value functions $V^{R}(s)$ and $V^{S}(s)$ for the recovery policy and student policy respectively to be over-estimated in practice during learning: a common challenge with DQN and actor-critic methods \citep{fujimoto2018maxbias}. 
The consequence of this in the LISPR framework is that the primal option will be under-utilized during training which may impact learning.
Thus a better approximation of $V^{R}(s)$ and $V^{S}(s)$ is needed for practical implementations of the LISPR framework.
What we did in our implementation that worked surprisingly well was to learn an on-policy value function of the main policy $V^{\pi}(s)$ using standard TD learning such that $\tau(s)=V^{\pi}(s)$.
This is inherently biased when trained with a replay buffer but this did not seem to have a substantial affect on performance when training recovery and student policies.

To ensure exploration, we used $\epsilon$-greedy exploration like many other policy transfer and policy-reuse approaches \citep{xie2018trainingwheels, barreto2017successorfeatures, fernando2010probpolicyreuse, siyuan2019contextawarepolicyreuse, yang2020ptf}.
Thus, with probability $\epsilon \in [0,1]$ at each time step, one of the primal option $o^P$ and the learner option $o^L$ is selected, otherwise the main policy is followed as per Definition \ref{def_main_policy}.
However, more sophisticated exploration mechanisms our needed that commit to the execution of options over longer time horizon.
One such approach that deserves more investigation in future work is the Ornstein-Ulhembeck exploration commonly used in DDPG \citep{lillicrap2016ddpg}.
Since the threshold $\tau(s)$ in Definition \ref{def_initiation_set} is a real-value, OU exploration can be applied to $\tau(s)$ to achieve exploration of the options over multiple time steps.
However, effective methods of exploration of the main policy is beyond the scope of this paper.
\section{Experiments}

We demonstrate the LISPR framework in three environments:  (1) two tabular worlds called "multiroom world" and "box world", (2) a navigation task with high-dimensional 2D lidar observations called the "social navigation task", (3) the "lunar lander task", and (4) the "bipedal walker task".
All results, unless reported otherwise, are test results rather than training results where training is interrupted every 1000 iterations to report the average return over 10 random instances of the environment.
Experiments are repeated 10 times where a 95\% confidence interval is shaded around the mean return to highlight the confidence bounds.
The lunar lander results are provided in the Appendix.

\subsection{Tabular Examples}
\subsubsection{Multiroom World}
In order to demonstrate and visualize the LISPR framework, we look to tabular domains.
Several tasks were constructed using pycolab.
The first task involved transfer from a two room grid world to a four room grid world depicted in Figure \ref{fig_tabular_tasks}.
The value functions are not transferred to the target domain; instead, the values for $G(s)$ and the recovery policy are all set to zero and must be learned from scratch; only the policy is transferred through a simple state transformation between the grid worlds.
Each state in the source grid world is represented by a 2D coordinate that is mapped to a state in the target grid world to enable transfer.
The agent starts in any open space and receives a terminal reward of +1 for reaching the fixed goal location.
The value estimates for the source task and target task are visualized in Figure \ref{fig_tabular_tasks}.

\begin{figure}
    \centering
    \includegraphics[width=0.45\textwidth]{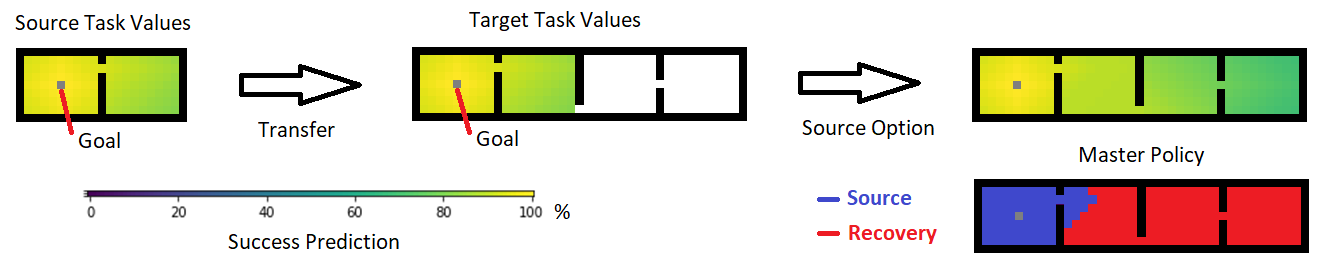}
    \caption{Visualisation of the value functions in the source and target tasks in \textbf{multiroom world} where yellow predicts higher success than green and white predicts zero success for $\gamma=0.99$.  The main policy decision is depicted for a threshold of $\tau(s)=0.9$.}
    \label{fig_tabular_tasks}
\end{figure}

\subsubsection{Box World}
The \textbf{multiroom world} task is quite simple for illustration of the idea.
Q-learning with eligibility traces trains very quickly in the target task and thus very little improvement in learning speed is expected.
To understand the effectiveness of the LISPR framework, a more complex grid world task was constructed with the goal of pushing a box onto a goal square. 
The source task is a single room and the target task is a larger room where the state is mapped according to the fixed goal position.
The reward is $+1$ when the box is pushed onto the goal square.
The source grid world only required about 100K samples to train using Q-Learning with eligibility traces.
The target grid world required over 1M samples.
A noticeable improvement in learning efficiency is demonstrated when learning with the source option from the source task.

\begin{figure}
    \centering
    \includegraphics[width=0.45\textwidth]{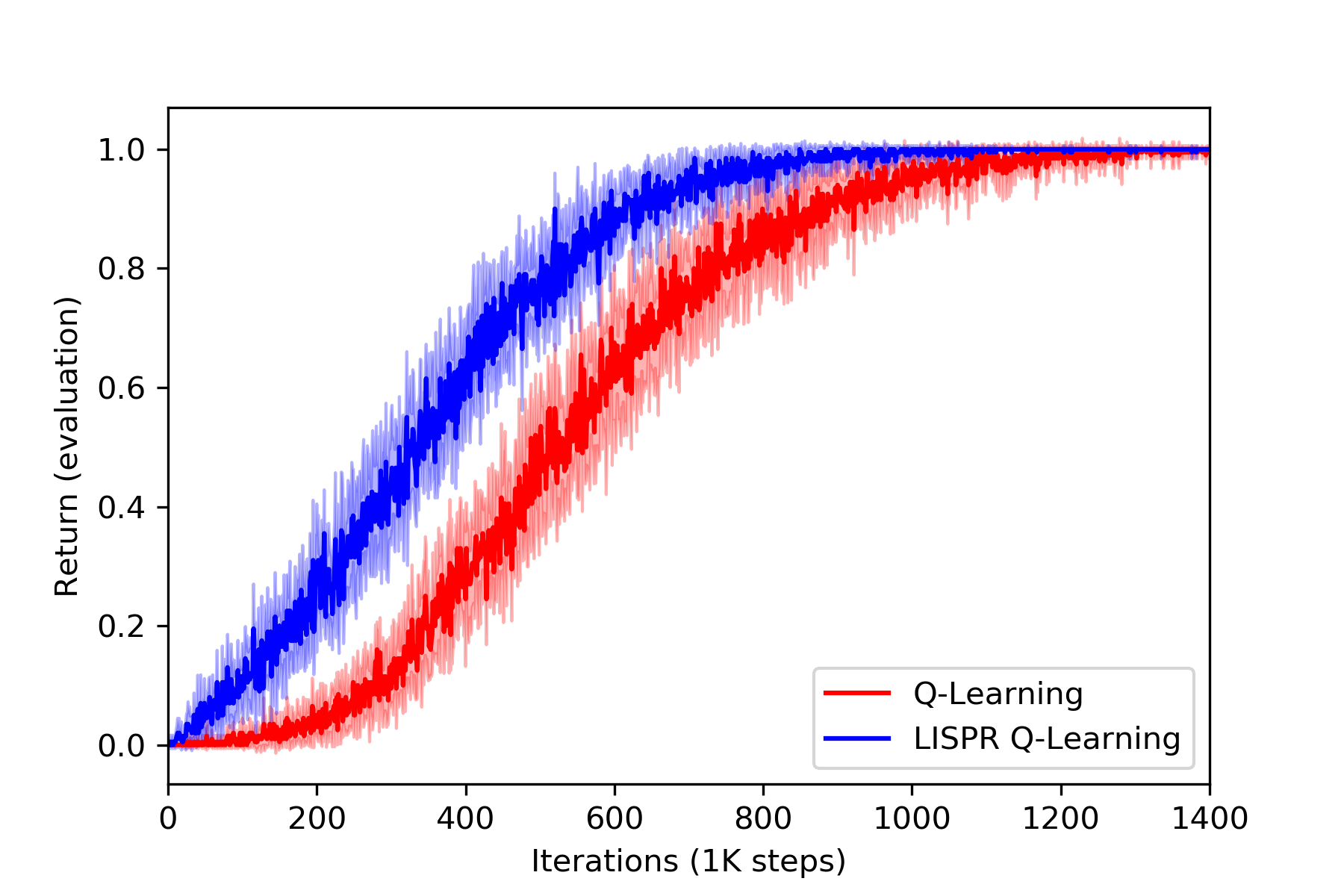}
    \caption{Demonstration of sample efficiency improvement with the LISPR framework in \textbf{box world}}
\label{fig_tabular_performance}
\end{figure}

Suitable parameter sweeps were conducted to find the best $\epsilon$-greedy value for the main policy, along with learning rates and $\lambda$ for the eligibility trace.
Details are given in the Appendix.

\subsection{Social Navigation with a Heuristic Source Policy}
We demonstrate LISPR on a substantially more challenging task of navigating a robot through a small group of moving pedestrians modelled with the ORCA \citep{berg2011orca} collision avoidance algorithm.
The robot receives high dimensional 2D laser scans of the environment and the position of the goal location.
The simulator was developed in-house and is the same simulator used in \citep{jinj2020socialnavigation} with a few changes:  (1) the environment includes about two to three pedestrians, (2) the control frequent is 20Hz, and (3) the observation that includes a history of twenty 2D laser scans.
The changes were designed to reduce the learning time needed to train the agent while still retaining the complexity of navigating in a dynamic environment.
The reward structure was defined to be sparse such that the agent receives $+1$ for reaching the destination, $-1$ for collision and zero otherwise.
Agents are given 1 million samples of experience which results in 500K updates (iterations) to the models; an update is applied every two steps in the environment.
More details are found in the appendix.

The source policy is a very simple heuristic policy based on the relative position of the goal to determine the differential steer velocities for the left and right wheels.
The heuristic policy receives a return of just under 0.1 on average due to frequent collisions.
The source policy performs quite poorly on the task since it only "sees" the goal; nevertheless, the LISPR framework can exploit the source policy to quickly learn a new policy that avoids pedestrians.

\begin{figure}
    \centering
    \includegraphics[width=0.45\textwidth]{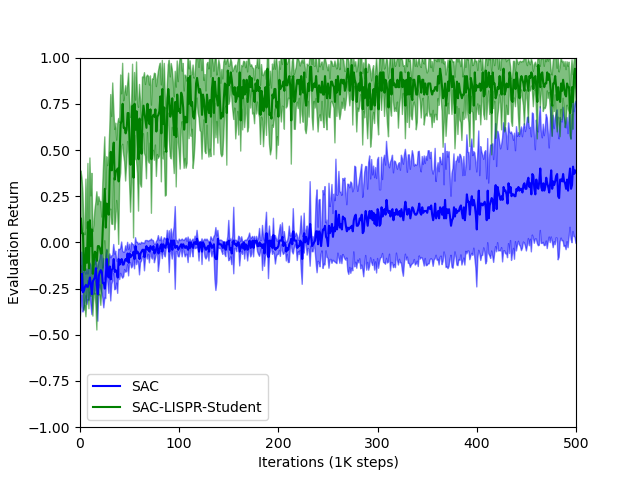}
    \caption{Comparison of performance on \textbf{social navigation task} with soft-actor critic (SAC) and LISPR  with a student policy learned via SAC (SAC-LISPR-Student)}
    \label{fig_toy_sac}
\end{figure}

Figure \ref{fig_toy_sac} shows that the LISPR solution is reached within 100K updates to the model (iterations) but when learned tabular rasa, SAC learns within 1M updates to the model (iterations).
Both "recovery" and "student" policies effectively have the same learning curves (not shown).
With the right exploration policy however the performance gap narrows significantly indicating that a big advantage of the LISPR framework is through improved exploration.
The Ornstein-Ulhenbeck exploration process of deep deterministic policy gradient (DDPG), when properly tuned, is nearly as effective as LISPR in learning a suitable policy; more details are in the Appendix.
The neural network architectures for both DDPG and SAC were the same as \citep{jinj2020socialnavigation}.

\subsection{Bipedal with a Neural Source Policy}
LISPR was also applied to the BipedalWalker Hardcore environment in gym.
The transfer task is defined by learning a policy in BipedalWalker and transferring to BipedalWalker Hardcore:  we call this the \textbf{bipedal walker task}.
We started by training two policies with DDPG and SAC respectively in BipedalWalker that were evaluated to reach satisfactory performance when tested.
These policies were transferred to BipedalWalker Hardcore as source policies using the LISPR framework.
Student and recovery policies were trained for 2M updates to the models which equated to 4M samples in the environment since an update was completed every two steps.
Comparisons to progressive networks \citep{rusu2017} and warm-start initialization were done with the same policies from the source environment.
After significant hyperparameter tuning, it was found that frame skipping with three frames resulted in a significant performance improvement.
The DDPG exploration used the Ornstein-Uhlenbeck process that was fine tuned to achieve the best possible performance in BipedalWalker.
Unfortunately, training DDPG in BipedalWalker Hardcore was quite challenging and inconsistent,  whereas training with SAC was more consistent and performed better.
Only a few of the runs of DDPG were able to achieve very good final performance after 2M updates to the model (iterations) including progressive networks; on average, SAC baselines performed reasonably well.
Warm initialization showed very minor improvement with SAC as shown in Figures \ref{fig_bipedal_ddpg} and \ref{fig_bipedal_sac}.

\begin{figure}
    \centering
    \includegraphics[width=0.45\textwidth]{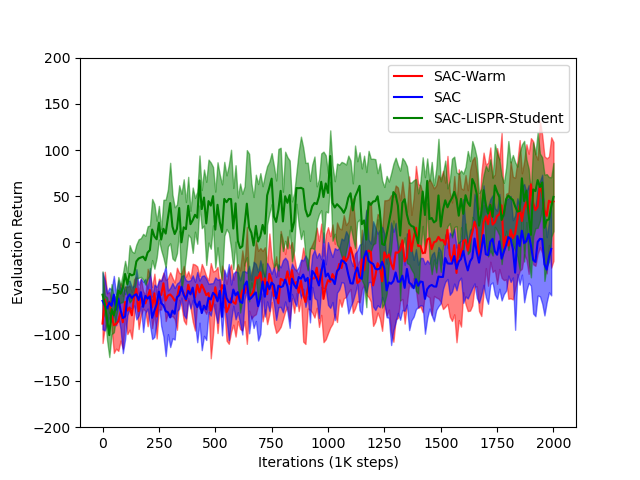}
    \caption{Comparison of performance on \textbf{bipedal walker task} with SAC, warm-start initialization with SAC, and LISPR with a student policy learned via SAC (SAC-LISPR-Student)}
    \label{fig_bipedal_sac}
\end{figure}
\begin{figure}
    \centering
    \includegraphics[width=0.45\textwidth]{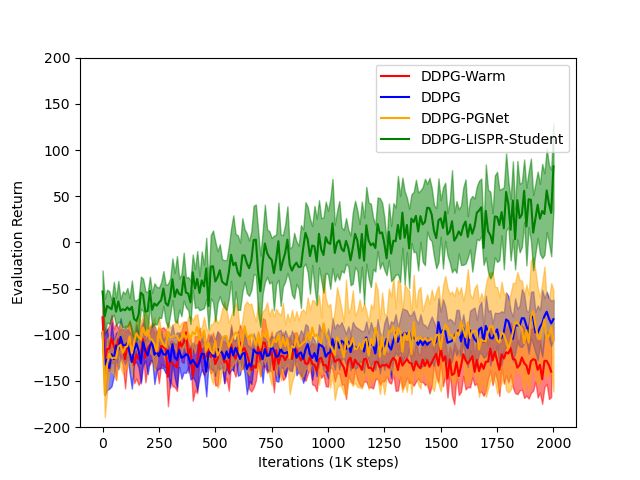}
    \caption{Comparison of performance on \textbf{bipedal walker task} with DDPG, warm-start initialization with DDPG, progressive networks with DDPG and LISPR  with a student policy learned via DDPG (DDPG-LISPR-Student)}
    \label{fig_bipedal_ddpg}
\end{figure}

One hypothesis for the difficulty in transfer between these tasks is the covariate shift in the lidar observations of the bipedal walker between the source and target environments due to the obstacles, pits and stairs in the target environment that never appear in the source environment.
The covariate shift may result in a poor initialization and poor transfer of features.
The LISPR framework, on the other hand, learns when the policy is "good" and "bad" and only uses the policy when it is predicted to be good.
Thus, anomalies as a result of the covariate shift are predicted and handled by the recovery policy.
We also found that there is no significant difference between the performance of the recovery policy and student policy.
\section{Conclusions}
The LISPR framework is a specialization of the options framework that provides a principled approach to learning the initiation set of a policy using general value functions.
It does not assume a particular implementation of the source policy.
This allows for an interesting blend of heuristic and learned policies without sacrificing optimality.
The framework has some nice theoretical properties including an optimal definition for the main policy, guarantee in improvement if the true value of the learned policy improves, and guarantee of minimum performance.

Two specific choices of learning architecture, learning to master and learning to recover, are presented and shown to be equivalent under certain conditions, with the first easier to implement and the second possibly indispensable in practice either because exploration in the part of the state space covered by the source policy is impossible or because the source policy has to be integrated as is for other reasons such as certification or license restriction.

Experiments have demonstrated successful transfers wherein the source policies were reused to solve a harder problem in a way similar to what happens in the zone of proximal development \citep{vygotsky1978}.
It is worth noting that some source options are very helpful in solving sparse reward problems.
Additionally, our experiments suggest that the LISPR framework is less sensitive to covariate shift that happens during policy transfer with reinforcement learning.
On the other hand, performance improvements may be minimal on tasks with extensive reward shaping, suggesting that much of the performance gains are due to better exploration.

Some unanswered questions include a possible theoretical relationship between LISPR and reward shaping and improving exploration of the main policy.
It is also natural to extend the framework to learning goal-conditioned initiation sets as in \citep{demir2019} by making use of, for example, universal value functions \citep{schaul2015uvfa}.
Finally, while we designed the LISPR framework with an eye on real-world robotics applications, its benefits there remain to be demonstrated in future work, and we suspect that further system-level constraints and requirements will need to be incorporated in both learning of the initiation set and learning to recover or master.
 
\bibliographystyle{apalike}
\bibliography{references}

\clearpage
\newpage
\appendix
\section{Appendix}
\subsection{Recovery Option Proofs}

In this section, we prove the theorems and lemmas concerning the learning of recovery options as defined in Definition \ref{def_recovery_option}.  We start with the proof of optimality which is straightforward if we assume that recovery policy learning is optimal.  While such an assumption is strong, the point is that the LISPR framework guarantees optimality for the overall main policy under such an assumption.

\optimalrecoverytheorem*
\begin{proof}
Let us assume that the recovery policy $\pi^{R*}(a|s)$ is optimal such that the value $V^{R*}(s)=\max_{\forall \pi}{V^{\pi}(s)}$ for all states $s \in S$.  We define the threshold $\tau(s)=V^{R*}(s)$ for all states $s \in S$.  Let us consider the following cases:
\setcounter{casep}{0}
\begin{casep}[$s_t \in L$]
\label{recovery_optimality_case_s_in_L}
From Definition \ref{def_initiation_set} we have the condition that $G(s_t) \geq V^{R*}(s_t)$ for all states $s_t \in L$.  However, due to $\pi^{R*}(a|s)$ being optimal, we also have $G(s_t) \leq V^{R*}(s_t)$.  Therefore, $G(s_t)=V^{R*}(s_t)$ for all states $s_t \in L$.  This means that the primal option is optimal if $s_t \in L$.  Furthermore, the value of the main policy is given by
\begin{equation}
    \begin{split}
        V^{\pi}(s_t)  & = \E_{\mu} \left[ \sum_{i=0}^{\infty}{\gamma^i r_{t+i+1}} \right]\\
                    & = G(s_t)
    \end{split}
\end{equation}
\end{casep}
\begin{casep}[$s_k \notin L$ for all $k \geq t$]
From the main policy definition, we have $\pi(s_t)=o^R$ for all states $s_t \notin L$.  Therefore, the value of the main policy is given by 
\begin{equation}
    \begin{split}
        V^{\pi}(s_t)  & = \E_{\pi^{R*}} \left[ \sum_{i=0}^{\infty}{\gamma^i r_{t+i+1}} \right]\\
                    & = V^{R*}(s_t)
    \end{split}
\end{equation}
\end{casep}
\begin{casep}[$s_k \notin L$ for all $t \leq k < T$ where $s_T \in L$]
Let us consider a future state $s_T$ for $T>t$ where $s_T \in L$ such that all states $s_k \notin L$ from $k=t ... T-1$, then $V^\pi(s_t)$ can be written as
\begin{equation}
    \begin{split}
        V^{\pi}(s_t)  & = \E_{\pi} \left[ \sum_{i=0}^{\infty}{\gamma^i r_{t+i+1}} \right]\\
                    & = \E_{\pi^{R*}} \left[ \sum_{i=0}^{T-t-1}{\gamma^i r_{t+i+1}} \right] + \gamma^{T-t} V^{\pi}(s_T)\\
                    & = V^{R*}(s_t) - \gamma^{T-t} V^{R*}(s_T) + \gamma^{T-t} V^{\pi}(s_T)\\
                    & = V^{R*}(s_t) + \gamma^{T-t} \left( V^{\pi}(s_T) - V^{R*}(s_T) \right)
    \end{split}
    \label{eq_main_value_s_notin_L}
\end{equation}

Since $s_T \in L$, we can use \cref{recovery_optimality_case_s_in_L} to get 
\begin{equation}
    \begin{split}
        V^{\pi}(s_t) & = V^{R*}(s_t) + \gamma^{T-t} \left( G(s_T) - G(s_T) \right)\\
                    & = V^{R*}(s_t)
    \end{split}
\end{equation}
\end{casep}

Taking the three cases together, we have $V^{\pi}(s)=V^{R*}(s_t)$ for all states $s_t \in S$.  Therefore, $\pi(s)$ in Definition \ref{def_main_policy} is optimal if $\pi^{R*}(a|s)$ is optimal.

\end{proof}

Next, we introduce some notations for finite horizon values that will be useful in future proofs.

\begin{defn}[Finite Horizon for Primal Option]
The finite horizon value for trajectory of states $s_t ... s_{T-1}$ under the primal option is given by 
\begin{equation}
    \begin{split}
        H_T(s_t) & = \E_{\mu} \left[ \sum_{i=0}^{T-1}{\gamma^i r_{t+i+1}} \right]\\
            & = G(s_t) - \gamma^{T-t} G(s_T)
    \end{split}
    \label{eq_finite_horizon_source}
\end{equation}
For ease of notation, we define $H_t(s_t)=0$.
\label{def_finite_horizon_source}
\end{defn}

\begin{defn}[Finite Horizon for Recovery Option]
The finite horizon value for trajectory of states $s_t ... s_{T-1}$ under the recovery option is given by 
\begin{equation}
    \begin{split}
        H_T^{R}(s_t) & = \E_{R} \left[ \sum_{i=0}^{T-1}{\gamma^i r_{t+i+1}} \right]\\
            & = V^{R}(s_t) - \gamma^{T-t} V^{R}(s_T)
    \end{split}
    \label{eq_finite_horizon_recovery}
\end{equation}
For ease of notation, we define $H^{R}_t(s_t)=0$.
\label{def_finite_horizon_recovery}
\end{defn}

To prove the improvement Theorem \ref{thm_recovery_improvement}, we need Lemmas \ref{lem_finite_horizon_recovery_inequality_s_in_L} and \ref{lem_finite_horizon_recovery_inequality_s_notin_L} below.

\begin{lem}[Finite Horizon Inequality Starting With the Primal Option]
If there is a trajectory of states from $s_t \in L$ to $s_T \notin L$ such that $G(s_T) < V^R(s_T)$, where $s_k \in L$ for $k=t...T-1$, then $H_T(s_t) \geq H_T^{R}(s_t)$.
\begin{proof}
\begin{equation}
    \begin{split}
        G(s_t) & \geq V^{R}(s_t)\\
        G(s_t) - \gamma^{T-t} G(s_T) & \geq V^{R}(s_t) - \gamma^{T-t} V^{R}(s_T)\\
        H_T(s_t) & \geq H_T^{R}(s_t)
    \end{split}
    \label{eq_finite_horizon_recovery_inequality_s_in_L}
\end{equation}
\end{proof}
\label{lem_finite_horizon_recovery_inequality_s_in_L}
\end{lem}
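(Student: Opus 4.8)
The plan is to collapse the claim to a one-line sign comparison by inserting the closed forms of the two finite-horizon values and reading off the two endpoint inequalities that the hypotheses supply. First I would record what membership $s_t \in L$ gives us: since in this section the threshold is $\tau(s) = V^{R}(s)$, Definition \ref{def_initiation_set} yields immediately $G(s_t) \geq V^{R}(s_t)$, the inequality at the \emph{start} of the trajectory. Dually, the assumption $s_T \notin L$ is, under the same choice of $\tau$, exactly the stated condition $G(s_T) < V^{R}(s_T)$, the inequality at the \emph{end} of the trajectory.

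Next I would unfold the two quantities through their definitions. By Definition \ref{def_finite_horizon_source}, $H_T(s_t) = G(s_t) - \gamma^{T-t} G(s_T)$, and by Definition \ref{def_finite_horizon_recovery}, $H_T^{R}(s_t) = V^{R}(s_t) - \gamma^{T-t} V^{R}(s_T)$. Subtracting, the target inequality $H_T(s_t) \geq H_T^{R}(s_t)$ is equivalent to
\begin{equation}
    G(s_t) - V^{R}(s_t) \geq \gamma^{T-t} \left( G(s_T) - V^{R}(s_T) \right).
\end{equation}
At this point the two endpoint inequalities line up perfectly: the left-hand side is nonnegative by the start inequality, whereas the right-hand side is strictly negative because $G(s_T) - V^{R}(s_T) < 0$ and $\gamma^{T-t} > 0$. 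A nonnegative number dominates a negative one, so the inequality holds and the lemma follows.

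I do not expect a genuine obstacle here; the content is a sign argument rather than a difficulty. The single point I would be careful about is the contextual role of the path condition $s_k \in L$ for $k = t \ldots T-1$. It is not what drives the algebra, since the telescoping identities for $H_T$ and $H_T^{R}$ hold for any horizon; rather, it guarantees that along this trajectory it is genuinely the primal option that the main policy executes, so that $H_T(s_t)$ is the accumulated reward to be weighed against $H_T^{R}(s_t)$. I would flag this so the lemma slots cleanly into the proof of Theorem \ref{thm_recovery_improvement}, where such finite-horizon segments are stitched together across alternating stretches inside and outside $L$.
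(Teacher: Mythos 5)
Your proof is correct and is essentially the paper's own argument: both rest on the same two endpoint inequalities, $G(s_t) \geq V^{R}(s_t)$ from $s_t \in L$ and the hypothesized $G(s_T) < V^{R}(s_T)$, combined through the closed forms $H_T(s_t) = G(s_t) - \gamma^{T-t}G(s_T)$ and $H_T^{R}(s_t) = V^{R}(s_t) - \gamma^{T-t}V^{R}(s_T)$. If anything, your sign-comparison rearrangement makes explicit what the paper's chain of inequalities leaves implicit, namely that passing from its first line to its second requires $\gamma^{T-t}G(s_T) \leq \gamma^{T-t}V^{R}(s_T)$, which is precisely where the endpoint hypothesis enters.
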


\begin{lem}[Finite Horizon Inequality Starting with the Recovery Policy]
If there is a trajectory of states from $s_t \notin L$ to $s_T \in L$, where $s_k \notin L$ for $k=t...T-1$, then $H_T(s_t) < H_T^{R}(s_t)$.
\begin{proof}
\begin{equation}
    \begin{split}
        G(s_t) & < V^{R}(s_t)\\
        G(s_t) - \gamma^{T-t} G(s_T) & < V^{R}(s_t) - \gamma^{T-t} V^{R}(s_T)\\
        H_T(s_t) & < H_T^{R}(s_t)
    \end{split}
    \label{eq_finite_horizon_recovery_inequality_s_notin_L}
\end{equation}
\end{proof}
\label{lem_finite_horizon_recovery_inequality_s_notin_L}
\end{lem}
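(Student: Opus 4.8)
The plan is to prove the strict inequality by directly substituting the closed-form expressions for the two finite-horizon values and then reducing everything to the membership conditions on the initiation set $L$ at the two endpoints of the trajectory. Throughout I would work in the recovery setting, where $\tau(s)=V^{R}(s)$, so that by Definition \ref{def_initiation_set} the set $L$ is exactly $\{s \in S : G(s) \geq V^{R}(s)\}$.

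First I would extract the two endpoint facts from the trajectory hypothesis. Since $s_t \notin L$, the defining condition of $L$ fails at $s_t$, giving the strict inequality $G(s_t) < V^{R}(s_t)$. Since $s_T \in L$, the defining condition holds at $s_T$, giving $G(s_T) \geq V^{R}(s_T)$. Next I would invoke the closed forms from Definitions \ref{def_finite_horizon_source} and \ref{def_finite_horizon_recovery}, namely $H_T(s_t)=G(s_t)-\gamma^{T-t}G(s_T)$ and $H_T^{R}(s_t)=V^{R}(s_t)-\gamma^{T-t}V^{R}(s_T)$, and form their difference
\begin{equation}
    H_T(s_t)-H_T^{R}(s_t) = \left(G(s_t)-V^{R}(s_t)\right) - \gamma^{T-t}\left(G(s_T)-V^{R}(s_T)\right).
\end{equation}

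The conclusion then follows from a sign analysis of the two grouped terms. The first term $G(s_t)-V^{R}(s_t)$ is strictly negative by the $s_t \notin L$ condition. The bracketed quantity $G(s_T)-V^{R}(s_T)$ is non-negative by the $s_T \in L$ condition, and since $\gamma^{T-t} \geq 0$, the term being subtracted is non-negative; subtracting a non-negative number can only make the total more negative. Hence the difference is strictly negative and $H_T(s_t) < H_T^{R}(s_t)$, as claimed.

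The computation itself is elementary, so the real content — and the only place one could slip — lies in being careful about which endpoint supplies which inequality and in preserving the strictness. The strictness of the conclusion rides entirely on the strict inequality at the starting state $s_t \notin L$; the terminal state $s_T \in L$ contributes only a weak inequality, which is exactly what is needed since its contribution is subtracted. I would also note for clarity that the intermediate hypothesis $s_k \notin L$ for $k=t,\dots,T-1$ is not required for the algebraic inequality per se — only the two endpoints enter — but it is what guarantees this is precisely the finite-horizon segment over which the recovery policy is executed under the main policy, which is why the lemma is stated with that hypothesis for use in the downstream improvement argument.
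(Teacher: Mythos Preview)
Your proof is correct and follows essentially the same approach as the paper: both reduce the inequality to the two endpoint conditions $G(s_t)<V^{R}(s_t)$ (from $s_t\notin L$) and $G(s_T)\geq V^{R}(s_T)$ (from $s_T\in L$) and then compare the closed-form expressions from Definitions~\ref{def_finite_horizon_source} and~\ref{def_finite_horizon_recovery}. Your version is in fact more explicit than the paper's terse three-line derivation, and your remarks on where the strictness originates and on the role of the intermediate hypothesis are accurate and helpful.
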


We will now prove Lemma \ref{lem_recovery_lower_bound}, which says that the performance of the recovery policy has a lower bound.

\boundrecoverylemma*
\begin{proof}
Consider the following cases given any state $s_t \in S$:
\setcounter{casep}{0}
\begin{casep}[Trajectory starting in state $s_t \in L$]
Let us denote times where the main policy changes between ``source'' and ``recovery'' as $T_{j+1} > T_{j} > t$ for $j=1 ... N$ where $N$ is the number of total changes starting from state $s_t$.  We can write the value of the main policy in terms of finite horizons given in Definitions \ref{def_finite_horizon_source} and \ref{def_finite_horizon_recovery}.  Given Lemmas \ref{lem_finite_horizon_recovery_inequality_s_in_L} and \ref{lem_finite_horizon_recovery_inequality_s_notin_L}, the main policy value function can be written as
\begin{equation}
    \begin{split}
        V^{\pi}(s_t) & = H_{T_1}(s_t) + \gamma^{T_1-t} H_{T_2}^{R}(s_{T_1}) + \gamma^{T_2-t} H_{T_3}(s_{T_2}) + ...\\
                & \geq H_{T_1}(s_t) + \gamma^{T_1-t} H_{T_2}(s_{T_1}) + \gamma^{T_2-t} H_{T_3}(s_{T_2}) + ...\\
                & \geq G(s_t)
    \end{split}
    \label{eq_vpi_finite_horizon_recovery_st_in_L}
\end{equation}
Since $s_t \in L$, we also have $G(s_t) \geq V^{R}(s_t)$ by Definition \ref{def_initiation_set}.  Therefore, $V^{\pi}(s_t) \geq G(s_t) \geq V^{R}(s_t)$ for any trajectory starting in state $s_t \in L$.
\end{casep}
\begin{casep}[Trajectory starting in state $s_t \notin L$]
Let us denote times where the main policy changes between ``source'' and ``recovery'' as $T_{j+1} > T_{j} > t$ for $j=1 ... N$ where $N$ is the number of total changes starting from state $s_t$.  We can write the value of the main policy in terms of finite horizons given in Definitions \ref{def_finite_horizon_source} and \ref{def_finite_horizon_recovery}.  Given Lemmas \ref{lem_finite_horizon_recovery_inequality_s_in_L} and \ref{lem_finite_horizon_recovery_inequality_s_notin_L}, the main policy value function can be written as
\begin{equation}
    \begin{split}
        V^{\pi}(s_t) & = H_{T_1}^{R}(s_t) + \gamma^{T_1-t} H_{T_2}(s_{T_1}) + \gamma^{T_2-t} H_{T_3}^{R}(s_{T_2}) + ...\\
                & \geq H_{T_1}^{R}(s_t) + \gamma^{T_1-t} H_{T_2}^{R}(s_{T_1}) + \gamma^{T_2-t} H_{T_3}^{R}(s_{T_2}) + ...\\
                & \geq V^{R}(s_t)
    \end{split}
    \label{eq_vpi_finite_horizon_recovery_st_not_in_L}
\end{equation}
Since $s_t \notin L$, we also have $G(s_t) < V^{R}(s_t)$ by Definition \ref{def_initiation_set}.  Therefore, $V^{\pi}(s_t) \geq V^{R}(s_t) > G(s_t)$ for any trajectory starting in state $s_t \notin L$.
\end{casep}
Combining these cases, we conclude that $V^{\pi}(s) \geq V^{R}(s)$ and $V^{\pi}(s) \geq G(s)$ for all $s \in S$.
\end{proof}

We go one step further to define the value of the main policy according to $G(s)$ and $V^{R}(s)$.

\begin{lem}[Main Policy Value with Recovery Option]
The value of the main policy $V^{\pi}(s)$ is equal to $V^{\pi}(s)+V^{\bar{\pi}}(s)=G(s)+V^{R}(s)$ given recovery policy $\pi^{R}(a|s)$ where $\bar{\pi}(s)$ is the ``anti-main'' policy where the decision rule in Equation \ref{eq_main_policy} is reversed such that $o^P$ is chosen if $s \notin L$ and $o^R$ otherwise.
\begin{proof}
Let us consider the following cases:
\setcounter{casep}{0}
\begin{casep}[State $s \in L$]
We can write $V^{\pi}(s_t) - V^{R}(s_t)$ as a function of finite horizons:
\begin{equation}
    \begin{split}
        V^{\pi}(s_t) - V^{R}(s_t) = & H_{T_1}(s_t) + \gamma^{T_2-t} H_{T_3}(s_{T_2}) + ...\\
            & - (H_{T_1}^{R}(s_t) + \gamma^{T_2-t}H_{T_3}^{R}(s_{T_2}) + ...)
    \end{split}    
\end{equation}
where $t \leq T_1 \leq T_2 \leq ...$ such that $s_t \in L, s_{T_1} \notin L, s_{T_2} \in L, ...$.  Similarly we can write $V^{\pi}(s_t) - G(s_t)$ as a function of finite horizons:
\begin{equation}
    \begin{split}
        V^{\pi}(s_t) - G(s_t) = & \gamma^{T_1-t} H_{T_2}^{R}(s_{T_1}) + \gamma^{T_3-t} H_{T_4}^{R}(s_{T_3}) + ...\\
            & - (\gamma^{T_1-t} H_{T_2}(s_{T_1}) + \gamma^{T_3-t} H_{T_4}(s_{T_3}) + ...)
    \end{split}
\end{equation}
Adding $V^{\pi}(s_t) - V^{R}(s_t)$ and $V^{\pi}(s_t) - G(s_t)$ together, gives us 
\begin{equation}
    \begin{split}
        2V^{\pi}(s_t) - G(s_t) - V^{R}(s_t) & = V^{\pi}(s_t) - V^{\bar{\pi}}(s_t)\\
        V^{\pi}(s_t) + V^{\bar{\pi}}(s_t) & = G(s_t) + V^{R}(s_t)
    \end{split}
\end{equation}
\end{casep}
\begin{casep}[State $s \notin L$]
Similar to the previous case, we can write $V^{\pi}(s_t) - V^{R}(s_t)$ as a function of finite horizons:
\begin{equation}
    \begin{split}
        V^{\pi}(s_t) - V^{R}(s_t) = & \gamma ^{T_1-t}H_{T_2}(s_{T_1}) + \gamma^{T_3-t} H_{T_4}(s_{T_3}) + ...\\
            & - (\gamma ^{T_1-t}H_{T_2}^{R}(s_{T_1}) + \gamma^{T_3-t} H^{R}_{T_4}(s_{T_3}) + ...)
    \end{split}
\end{equation}
where $t \leq T_1 \leq T_2 \leq ...$ such that $s_t \notin L, s_{T_1} \in L, s_{T_2} \notin L, ...$.  Similar we can write $V^{\pi}(s_t) - G(s_t)$ as a function of finite horizons of the recovery policy:
\begin{equation}
    \begin{split}
        V^{\pi}(s_t) - G(s_t) = & H_{T_1}^{R}(s_t) + \gamma^{T_2-t}H_{T_3}^{R}(s_{T_2}) + ...\\
            & - (H_{T_1}(s_t) + \gamma^{T_2-t}H_{T_3}(s_{T_2}) + ...)
    \end{split}
\end{equation}
Adding $V^{\pi}(s_t) - V^{R}(s_t)$ and $V^{\pi}(s_t) - G(s_t)$ together, gives us 
\begin{equation}
    \begin{split}
        2V^{\pi}(s_t) - G(s_t) - V^{R}(s_t) & = V^{\pi}(s_t) - V^{\bar{\pi}}(s_t)\\
        V^{\pi}(s_t) + V^{\bar{\pi}}(s_t) & = G(s_t) + V^{R}(s_t)
    \end{split}
\end{equation}
\end{casep}
Therefore, for all states $s \in S$, we have $V^{\pi}(s)+V^{\bar{\pi}}(s)=G(s)+V^{R}(s)$.
\end{proof}
\label{lem_main_value_recovery}
\end{lem}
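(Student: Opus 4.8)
The plan is to prove the identity $V^{\pi}(s)+V^{\bar{\pi}}(s)=G(s)+V^{R}(s)$ by decomposing all four value functions along one common sequence of boundary-crossing times and then recombining the pieces. First I would fix a state $s_t$ and let $T_1 < T_2 < \dots$ be the times at which the main policy's trajectory crosses the boundary of $L$, exactly as in the proof of Lemma \ref{lem_recovery_lower_bound}. These times carve the trajectory into consecutive segments, each lying entirely inside $L$ or entirely inside $S \setminus L$, with the two kinds alternating. To each segment I attach two numbers: its finite-horizon value under the primal option, $H_{\cdot}(\cdot)$ from Definition \ref{def_finite_horizon_source}, and its finite-horizon value under the recovery option, $H^{R}_{\cdot}(\cdot)$ from Definition \ref{def_finite_horizon_recovery}. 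The reason for using these telescoping forms is that a discounted sum of consecutive primal finite horizons collapses to $G$ at its endpoints, and a discounted sum of consecutive recovery finite horizons collapses to $V^{R}$.

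With this bookkeeping, each value function is an alternating sum over the segments, and the four functions differ only in which finite horizon is charged to which segment. Writing $\Sigma_{1}$ for the discounted sum of primal finite horizons over the in-$L$ segments, $\Sigma_{2}$ for the recovery finite horizons over the in-$L$ segments, $\Sigma_{3}$ for the primal finite horizons over the out-of-$L$ segments, and $\Sigma_{4}$ for the recovery finite horizons over the out-of-$L$ segments, the main policy (Definition \ref{def_main_policy}) uses the primal option inside $L$ and the recovery option outside, so $V^{\pi}(s_t)=\Sigma_{1}+\Sigma_{4}$; the \emph{anti-main} policy reverses the choice, so $V^{\bar{\pi}}(s_t)=\Sigma_{2}+\Sigma_{3}$; the success predictor corresponds to running the primal option everywhere, so by telescoping Definition \ref{def_finite_horizon_source} we get $G(s_t)=\Sigma_{1}+\Sigma_{3}$; and the recovery value corresponds to running the recovery option everywhere, so $V^{R}(s_t)=\Sigma_{2}+\Sigma_{4}$. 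Adding the first two against the last two gives $V^{\pi}(s_t)+V^{\bar{\pi}}(s_t)=\Sigma_{1}+\Sigma_{2}+\Sigma_{3}+\Sigma_{4}=G(s_t)+V^{R}(s_t)$, which is the claim.

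To finish I would note that the two cases $s_t \in L$ and $s_t \notin L$ differ only in whether the first segment is an in-$L$ or an out-of-$L$ segment; the recombination of $\Sigma_{1},\dots,\Sigma_{4}$ is identical in both, so the excerpt's two cases are really one argument. Equivalently, one can follow the excerpt's route by forming $V^{\pi}-V^{R}=\Sigma_{1}-\Sigma_{2}$ and $V^{\pi}-G=\Sigma_{4}-\Sigma_{3}$ and adding, so that $2V^{\pi}-G-V^{R}=\Sigma_{1}-\Sigma_{2}+\Sigma_{4}-\Sigma_{3}=V^{\pi}-V^{\bar{\pi}}$, which rearranges to the same identity.

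The step I expect to be delicate is justifying the telescoping of $G$ and $V^{R}$ along the \emph{main} policy's trajectory. The crossing times $T_j$ and crossing states $s_{T_j}$ are generated by the main policy, which switches between $\mu$ and $\pi^{R}$, whereas $G$ and $V^{R}$ are the values of running $\mu$, respectively $\pi^{R}$, throughout. The identities $H_{T}(s_t)=G(s_t)-\gamma^{T-t}G(s_T)$ and $H^{R}_{T}(s_t)=V^{R}(s_t)-\gamma^{T-t}V^{R}(s_T)$ must therefore be read at the expectation level and applied state-by-state at each crossing, invoking the Markov property so that $G(s_{T_j})$ and $V^{R}(s_{T_j})$ are the correct continuation values from the random crossing states. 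I would also make explicit that the recovery reward $r^{R}$ agrees with the target reward $r$ off the boundary of $L$, differing only by the $\gamma G$ bonus charged exactly at entry into $L$ per Definition \ref{def_recovery_option}, which is what makes the plain-reward segment values line up with the $G$- and $V^{R}$-telescopings simultaneously. Convergence of the possibly infinite alternating sum follows from the same geometric discounting already used in Lemma \ref{lem_recovery_lower_bound}.
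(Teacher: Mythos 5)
Your proposal is correct and takes essentially the same route as the paper's own proof: both decompose the trajectory at the boundary-crossing times $T_1 < T_2 < \dots$ into alternating in-$L$ and out-of-$L$ segments, express $V^{\pi}$, $V^{\bar{\pi}}$, $G(s)$, and $V^{R}$ through the finite horizons $H$ and $H^{R}$ of Definitions \ref{def_finite_horizon_source} and \ref{def_finite_horizon_recovery}, and recombine to obtain $2V^{\pi}-G-V^{R}=V^{\pi}-V^{\bar{\pi}}$, with your $\Sigma_{1},\dots,\Sigma_{4}$ bookkeeping being a cleaner, case-free rendering of the paper's two-case computation (your closing remark even reproduces the paper's exact steps). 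The telescoping subtlety you flag --- that $G$ and $V^{R}$ are expanded along crossing states generated by the main policy, which the paper applies silently (and likewise for identifying the second bracketed sum with $V^{\bar{\pi}}$ at the same crossing times) --- is genuine, so your explicit Markov-property caveat makes your version, if anything, slightly more careful than the original.
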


Finally, combining these results we get Theorem \ref{thm_recovery_improvement} which shows that if the true value of the recovery policy improves, the true value of the main policy is guaranteed to improve as well.

\improvementrecoverytheorem*
\begin{proof}
From Lemma \ref{lem_main_value_recovery}, we have the value of the main policy $V^{\pi}(s)+V^{\bar{\pi}}(s)=G(s)+V^{R}(s)$ and the value of the updated policy induced by the improved recovery policy given as $V^{\pi'}(s)+V^{\bar{\pi}'}(s)=G(s)+V^{R'}(s)$.  Since $V^{R'}(s) \geq V^{R}(s)$ for all $s \in S$, we have the following
\begin{equation}
    \begin{split}
        V^{R'}(s) - V^{R}(s) & \geq 0\\
        V^{\pi'}(s) + V^{\bar{\pi}'} - V^{\pi}(s) - V^{\bar{\pi}} & \geq 0\\
        V^{\pi'}(s) - V^{\pi}(s) & \geq V^{\bar{\pi}}(s) - V^{\bar{\pi}'}(s)
    \end{split}
    \label{eq_main_improve_inequality_recovery}
\end{equation}
Now suppose that there exists a states $s$ such that $V^{\pi}(s) > V^{\pi'}(s)$ for any state $s \in S$.  We will try to show by contradiction that this cannot be true.  If $V^{\pi}(s) > V^{\pi'}(s)$, then $V^{\pi'}(s) - V^{\pi}(s) < 0$ and it follow from the inequality in equation \eqref{eq_main_improve_inequality_recovery} that $V^{\bar{\pi}}(s) - V^{\bar{\pi}'}(s) < 0$.  In addition, we can use Lemma \ref{lem_main_value_recovery} to write $V^{\pi}(s) > V^{\pi'}(s)$ as
\begin{equation}
    \begin{split}
        V^{\pi}(s) & > V^{\pi'}(s)\\
        V^{R}(s) - V^{\bar{\pi}}(s) & > V^{R'}(s) - V^{\bar{\pi}'}(s)\\
        V^{\bar{\pi}'}(s) - V^{\bar{\pi}}(s) & > V^{R'}(s) - V^{R}(s)\\
            & > 0
    \end{split}
\end{equation}
However, $V^{\bar{\pi}'}(s) - V^{\bar{\pi}}(s) > 0$ contradicts that $V^{\bar{\pi}}(s) - V^{\bar{\pi}'}(s) < 0$ and thus all states $s \in S$ must satisfy $V^{\pi'}(s) 
\geq V^{\pi}(s)$.
\end{proof}

In order to support our discussion and understanding of the impact learning has on the value of the main policy, it is easy to show that an improvement in the recovery policy leads to a contraction of the initiation set $L$ given in Lemma \ref{lem_recovery_improvement_contracts_L}.  The implication is that learning will result in reducing the dependency on the primal option as the recovery option improves in a manner similar to curriculum learning.

\begin{lem}[Improving Recovery Contracts $L$]
If the recovery policy $\pi^{R}(a|s)$ is improved such that $V^{R'}(s) \geq V^{R}(s)$ then the new $\tau(s)=V^{R'}(s)$ results in a new initiation set $L'$ such that $L' \subseteq L$.
\begin{proof}
For any state $s \in L'$, $G(s) \geq V^{R'}(s)$ according to Definition \ref{def_initiation_set}; in addition, since $V^{R'}(s) \geq V^{R}(s)$, then $G(s) \geq V^{R}(s)$ and $s \in L$.  Thus any state $s \in L'$ must also be in $L$.  Alternatively, suppose that there exists a state $s \in L'$ that is not in $L$, then $G(s) < V^{R}(s)$ and $G(s) \geq V^{R'}(s)$ according to Definition \ref{def_initiation_set}. Since $V^{R'}(s) \geq V^{R}(s)$, this results in a contradiction.  Thus $L' \subseteq L$.
\end{proof}
\label{lem_recovery_improvement_contracts_L}
\end{lem}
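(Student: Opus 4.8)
The plan is to argue directly from Definition \ref{def_initiation_set} using nothing more than transitivity of the order on the reals, since the claim is essentially a monotonicity statement: raising the threshold function $\tau(s)$ pointwise can only shrink the super-level set that defines the initiation set. First I would write out the two initiation sets explicitly. By Definition \ref{def_initiation_set}, the original set, obtained with $\tau(s)=V^{R}(s)$, is $L=\{s \in S : G(s) \geq V^{R}(s)\}$, and the updated set, obtained with the new $\tau(s)=V^{R'}(s)$, is $L'=\{s \in S : G(s) \geq V^{R'}(s)\}$.

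The central observation I would stress is that the success predictor $G(s)$ is unchanged between the two settings. This is because $G(s)$ is, by Definition \ref{def_success}, the value of the fixed source policy $\mu$ in $M$, and updating the recovery policy from $\pi^{R}$ to $\pi^{R'}$ alters only $V^{R}$, not $G$. Hence the only quantity that moves when we pass from $L$ to $L'$ is the threshold, and it moves upward by hypothesis $V^{R'}(s) \geq V^{R}(s)$.

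With that in hand the inclusion is immediate: I would take an arbitrary $s \in L'$, so that $G(s) \geq V^{R'}(s)$ by definition of $L'$, then chain this with $V^{R'}(s) \geq V^{R}(s)$ to conclude $G(s) \geq V^{R}(s)$, which is exactly the membership condition for $L$. Since $s \in L'$ was arbitrary, $L' \subseteq L$. I do not anticipate a genuine obstacle here; the only point requiring care is the invariance of $G$ noted above. It is also worth presenting the dual contrapositive form that the statement invites: if $s \notin L$ then $G(s) < V^{R}(s) \leq V^{R'}(s)$, so $s \notin L'$, yielding the same conclusion and confirming that no state can enter the initiation set as a result of improving the recovery policy.
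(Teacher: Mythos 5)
Your proposal is correct and follows essentially the same route as the paper's proof: take $s \in L'$, use $G(s) \geq V^{R'}(s)$ from Definition \ref{def_initiation_set}, and chain with $V^{R'}(s) \geq V^{R}(s)$ to conclude $s \in L$ (your contrapositive variant mirrors the paper's supplementary contradiction argument). Your explicit remark that $G$ is invariant under the policy update is a useful clarification the paper leaves implicit, but it does not change the argument.
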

\subsection{Student Option Proofs}

In this section, we prove the theorems and lemmas concerning the learning of student options as defined in Definition \ref{def_student_option}.  We start first with the proof of optimality which is straight forward to show assuming that student policy learning is optimal.  While such an assumption is strong, the point is that the LISPR framework guarantees optimality for the overall main policy under such an assumption.

\optimalstudenttheorem*
\begin{proof}
Let us assume that the student policy $\pi^{S*}(a|s)$ is optimal such that the value $V^{S*}(s)=\max_{\forall \pi}{V^{\pi}(s)}$ for all states $s \in S$.  We define the threshold $\tau(s)=V^{S*}(s)$ for all states $s \in S$.  Let us consider the following cases:
\setcounter{casep}{0}
\begin{casep}[$s_t \in L$]
\label{student_optimality_case_s_in_L}
From Definition \ref{def_initiation_set} we have the condition that $G(s_t) \geq V^{S*}(s_t)$ for all states $s_t \in L$.  However, due to $\pi^{S*}(a|s)$ being optimal, we also have $G(s_t) \leq V^{S*}(s_t)$.  Therefore, $G(s_t)=V^{S*}(s_t)$ for all states $s_t \in L$.  This means that the primal option is optimal if $s_t \in L$.  Furthermore, the value of the main policy is given by
\begin{equation}
    \begin{split}
        V^{\pi}(s_t)  & = \E_{\mu} \left[ \sum_{i=0}^{\infty}{\gamma^i r_{t+i+1}} \right]\\
                    & = G(s_t)
    \end{split}
\end{equation}
\end{casep}
\begin{casep}[$s_k \notin L$ for all $k \geq t$]
From the main policy definition, we have $\pi(s_t)=o^S$ for all states $s_t \notin L$.  Therefore, the value of the main policy is given by 
\begin{equation}
    \begin{split}
        V^{\pi}(s_t)  & = \E_{\pi^{S*}} \left[ \sum_{i=0}^{\infty}{\gamma^i r_{t+i+1}} \right]\\
                    & = V^{S*}(s_t)
    \end{split}
\end{equation}
\end{casep}
\begin{casep}[$s_k \notin L$ for all $t \leq k < T$ where $s_T \in L$]
Let us consider a future state $s_T$ for $T>t$ where $s_T \in L$ such that all states $s_k \notin L$ from $k=t ... T-1$, then $V^\pi(s_t)$ can be written as
\begin{equation}
    \begin{split}
        V^{\pi}(s_t)  & = \E_{\pi} \left[ \sum_{i=0}^{\infty}{\gamma^i r_{t+i+1}} \right]\\
                    & = \E_{\pi^{S*}} \left[ \sum_{i=0}^{T-t-1}{\gamma^i r_{t+i+1}} \right] + \gamma^{T-t} V^{\pi}(s_T)\\
                    & = V^{S*}(s_t) - \gamma^{T-t} V^{S*}(s_T) + \gamma^{T-t} V^{\pi}(s_T)\\
                    & = V^{S*}(s_t) + \gamma^{T-t} \left( V^{\pi}(s_T) - V^{S*}(s_T) \right)
    \end{split}
    \label{eq_student_value_s_notin_L}
\end{equation}

Since $s_T \in L$, we can use \cref{student_optimality_case_s_in_L} to get 
\begin{equation}
    \begin{split}
        V^{\pi}(s_t) & = V^{S*}(s_t) + \gamma^{T-t} \left( G(s_T) - G(s_T) \right)\\
                    & = V^{S*}(s_t)
    \end{split}
\end{equation}
\end{casep}

Taking the three cases together, we have $V^{\pi}(s)=V^{S*}(s_t)$ for all states $s_t \in S$.  Therefore, $\pi(s)$ in Definition \ref{def_main_policy} is optimal if $\pi^{S*}(a|s)$ is optimal.
\end{proof}

Next, we introduce a notation for finite horizon value that will be useful in future proofs.

\begin{defn}[Finite Horizon for Student Options]
The finite horizon value for trajectory of states $s_t ... s_{T-1}$ under the student option is given by 
\begin{equation}
    \begin{split}
        H_T^{S}(s_t) & = \E_{\pi^{S}} \left[ \sum_{i=0}^{T-1}{\gamma^i r_{t+i+1}} \right]\\
            & = V^{S}(s_t) - \gamma^{T-t} V^{S}(s_T)
    \end{split}
    \label{eq_finite_horizon_student}
\end{equation}
\label{def_finite_horizon_student}
\end{defn}

To prove the improvement Theorem \ref{thm_student_improvement}, we need Lemmas \ref{lem_finite_horizon_student_inequality_s_in_L} and \ref{lem_finite_horizon_student_inequality_s_notin_L} below.

\begin{lem}[Finite Horizon Inequality Starting With the Primal Option]
If there is a trajectory of states from $s_t \in L$ to $s_T \notin L$ such that $G(s_T) < V^{S}(s_T)$, where $s_k \in L$ for $k=t...T-1$, then $H_T(s_t) \geq H_T^{S}(s_t)$.
\begin{proof}
\begin{equation}
    \begin{split}
        G(s_t) & \geq V^{S}(s_t)\\
        G(s_t) - \gamma^{T-t} G(s_T) & \geq V^{S}(s_t) - \gamma^{T-t} V^{S}(s_T)\\
        H_T(s_t) & \geq H_T^{S}(s_t)
    \end{split}
    \label{eq_finite_horizon_student_inequality_s_in_L}
\end{equation}
\end{proof}
\label{lem_finite_horizon_student_inequality_s_in_L}
\end{lem}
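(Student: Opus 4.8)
The plan is to reduce the claimed inequality between the two finite-horizon values to the two membership conditions on $L$, relying only on the closed forms in Definitions \ref{def_finite_horizon_source} and \ref{def_finite_horizon_student}. Since the student option fixes $\tau(s)=V^S(s)$, the hypotheses $s_t \in L$ and $s_T \notin L$ translate, through Definition \ref{def_initiation_set}, into the two scalar inequalities $G(s_t) \geq V^S(s_t)$ and $G(s_T) < V^S(s_T)$ (the latter also being stated explicitly in the hypothesis). These are the only facts I would use.

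First I would substitute $H_T(s_t)=G(s_t)-\gamma^{T-t}G(s_T)$ and $H_T^S(s_t)=V^S(s_t)-\gamma^{T-t}V^S(s_T)$, so that the target $H_T(s_t) \geq H_T^S(s_t)$ becomes the single scalar statement
\[
G(s_t)-V^S(s_t) \;\geq\; \gamma^{T-t}\bigl(G(s_T)-V^S(s_T)\bigr).
\]
The left-hand side is nonnegative by $s_t \in L$, while the parenthesized gap on the right is strictly negative by $s_T \notin L$; since $\gamma^{T-t}\geq 0$, the entire right-hand side is nonpositive, so the left dominates the right. Equivalently, in the form the recovery analogue (Lemma \ref{lem_finite_horizon_recovery_inequality_s_in_L}) uses, I would start from $G(s_t) \geq V^S(s_t)$ and subtract the smaller discounted terminal value $\gamma^{T-t}G(s_T)$ from the left and the larger $\gamma^{T-t}V^S(s_T)$ from the right, which preserves the inequality.

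I expect no genuine obstacle here, since the statement is a direct algebraic consequence of the definitions; the only point needing care is the sign bookkeeping. The membership inequality at the exit state $s_T$ points opposite to the one at the entry state $s_t$, so one must verify that nonnegativity of $\gamma^{T-t}$ makes the discounted terminal gap work in favor of the inequality rather than against it. Once that orientation is confirmed, the conclusion follows immediately and mirrors the recovery-option proof line for line.
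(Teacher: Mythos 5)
Your proposal is correct and is essentially the paper's own argument: both proofs combine $G(s_t) \geq V^{S}(s_t)$ (from $s_t \in L$ with $\tau(s)=V^{S}(s)$) with the hypothesized $G(s_T) < V^{S}(s_T)$ and the closed forms $H_T(s_t)=G(s_t)-\gamma^{T-t}G(s_T)$, $H_T^{S}(s_t)=V^{S}(s_t)-\gamma^{T-t}V^{S}(s_T)$ to conclude $H_T(s_t) \geq H_T^{S}(s_t)$. Your rearrangement into the gap form $G(s_t)-V^{S}(s_t) \geq \gamma^{T-t}\bigl(G(s_T)-V^{S}(s_T)\bigr)$ merely makes explicit the sign bookkeeping that the paper's terse three-line derivation leaves implicit in its second step.
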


\begin{lem}[Finite Horizon Inequality Starting with the Student Policy]
If there is a trajectory of states from $s_t \notin L$ to $s_T \in L$, where $s_k \notin L$ for $k=t...T-1$, then $H_T(s_t) < H_T^{S}(s_t)$.
\begin{proof}
\begin{equation}
    \begin{split}
        G(s_t) & < V^{S}(s_t)\\
        G(s_t) - \gamma^{T-t} G(s_T) & < V^{S}(s_t) - \gamma^{T-t} V^{S}(s_T)\\
        H_T(s_t) & < H_T^{S}(s_t)
    \end{split}
    \label{eq_finite_horizon_student_inequality_s_notin_L}
\end{equation}
\end{proof}
\label{lem_finite_horizon_student_inequality_s_notin_L}
\end{lem}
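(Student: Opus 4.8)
The plan is to prove the bound directly from Definition \ref{def_initiation_set} with the threshold choice $\tau(s)=V^{S}(s)$, in exact parallel to the recovery analogue established in Lemma \ref{lem_finite_horizon_recovery_inequality_s_notin_L}. By Definitions \ref{def_finite_horizon_source} and \ref{def_finite_horizon_student}, the two quantities being compared are $H_T(s_t)=G(s_t)-\gamma^{T-t}G(s_T)$ and $H_T^{S}(s_t)=V^{S}(s_t)-\gamma^{T-t}V^{S}(s_T)$; these differ only in that the primal success predictor $G$ replaces the student value $V^{S}$ at the two endpoints $s_t$ and $s_T$. Hence it suffices to compare $G$ against $V^{S}$ separately at each of these two endpoints.

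First I would invoke the hypothesis $s_t\notin L$. Under $\tau(s)=V^{S}(s)$ the initiation set is $L=\{s: G(s)\geq V^{S}(s)\}$, so failing to belong to $L$ means exactly $G(s_t)<V^{S}(s_t)$, a strict inequality at the starting state. Next I would use $s_T\in L$, which gives $G(s_T)\geq V^{S}(s_T)$; multiplying through by the nonnegative factor $\gamma^{T-t}$ and negating yields $-\gamma^{T-t}G(s_T)\leq-\gamma^{T-t}V^{S}(s_T)$. Adding the strict starting-state inequality to this weak terminal-state inequality preserves strictness, producing $G(s_t)-\gamma^{T-t}G(s_T)<V^{S}(s_t)-\gamma^{T-t}V^{S}(s_T)$, which is precisely $H_T(s_t)<H_T^{S}(s_t)$ by the finite-horizon definitions.

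There is no real obstacle here; the only point requiring care is bookkeeping on strictness. The strict inequality originates solely from the starting state $s_t\notin L$ (where membership fails under a non-strict $\geq$), whereas the terminal state $s_T\in L$ supplies only a weak inequality. Since a strict-plus-weak sum is strict, the conclusion is a genuine strict inequality as stated. The interior states $s_k$ for $t<k<T$ play no role in the bound, so no summation or telescoping over the trajectory interior is required.
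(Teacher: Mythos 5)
Your proof is correct and is essentially the paper's own argument: the paper's chain $G(s_t) < V^{S}(s_t) \Rightarrow G(s_t) - \gamma^{T-t}G(s_T) < V^{S}(s_t) - \gamma^{T-t}V^{S}(s_T)$ silently uses $G(s_T) \geq V^{S}(s_T)$ from $s_T \in L$, which you make explicit along with the strict-plus-weak bookkeeping. You have merely spelled out what the paper compresses, so there is nothing to add.
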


We will now prove Lemma \ref{lem_student_lower_bound}, which says that the performance of the student policy has a lower bound.

\boundstudentlemma*
\begin{proof}
Consider the following cases given any state $s_t \in S$:
\setcounter{casep}{0}
\begin{casep}[Trajectory starting in state $s_t \in L$]
Let us denote times where the main policy changes between ``source'' and ``student'' as $T_{j+1} > T_{j} > t$ for $j=1 ... N$ where $N$ is the number of total changes starting from state $s_t$.  We can write the value of the main policy in terms of finite horizons given in Definitions \ref{def_finite_horizon_source} and \ref{def_finite_horizon_student}.  Given Lemmas \ref{lem_finite_horizon_student_inequality_s_in_L} and \ref{lem_finite_horizon_student_inequality_s_notin_L}, the main policy value function can be written as
\begin{equation}
    \begin{split}
        V^{\pi}(s_t) & = H_{T_1}(s_t) + \gamma^{T_1-t} H_{T_2}^{S}(s_{T_1}) + \gamma^{T_2-t} H_{T_3}(s_{T_2}) + ...\\
                & \geq H_{T_1}(s_t) + \gamma^{T_1-t} H_{T_2}(s_{T_1}) + \gamma^{T_2-t} H_{T_3}(s_{T_2}) + ...\\
                & \geq G(s_t)
    \end{split}
    \label{eq_vpi_finite_horizon_student_st_in_L}
\end{equation}
Since $s_t \in L$, we also have $G(s_t) \geq V^{S}(s_t)$ by Definition \ref{def_initiation_set}.  Therefore, $V^{\pi}(s_t) \geq G(s_t) \geq V^{S}(s_t)$ for any trajectory starting in state $s_t \in L$.
\end{casep}
\begin{casep}[Trajectory starting in state $s_t \notin L$]
Let us denote times where the main policy changes between ``source'' and ``student'' as $T_{j+1} > T_{j} > t$ for $j=1 ... N$ where $N$ is the number of total changes starting from state $s_t$.  We can write the value of the main policy in terms of finite horizons given in Definitions \ref{def_finite_horizon_source} and \ref{def_finite_horizon_student}.  Given Lemmas \ref{lem_finite_horizon_student_inequality_s_in_L} and \ref{lem_finite_horizon_student_inequality_s_notin_L}, the main policy value function can be written as
\begin{equation}
    \begin{split}
        V^{\pi}(s_t) & = H_{T_1}^{S}(s_t) + \gamma^{T_1-t} H_{T_2}(s_{T_1}) + \gamma^{T_2-t} H_{T_3}^{S}(s_{T_2}) + ...\\
                & \geq H_{T_1}^{S}(s_t) + \gamma^{T_1-t} H_{T_2}^{S}(s_{T_1}) + \gamma^{T_2-t} H_{T_3}^{S}(s_{T_2}) + ...\\
                & \geq V^{S}(s_t)
    \end{split}
    \label{eq_vpi_finite_horizon_student_st_not_in_L}
\end{equation}
Since $s_t \notin L$, we also have $G(s_t) < V^{S}(s_t)$ by Definition \ref{def_initiation_set}.  Therefore, $V^{\pi}(s_t) \geq V^{S}(s_t) > G(s_t)$ for any trajectory starting in state $s_t \notin L$.
\end{casep}
Combining these cases, we conclude that $V^{\pi}(s) \geq V^{S}(s)$ and $V^{\pi}(s) \geq G(s)$ for all $s \in S$.
\end{proof}

We go one step further to define the value of the main policy according to $G(s)$ and $V^{S}(s)$.

\begin{lem}[Main Policy Value with Student Option]
The value of the main policy $V^{\pi}(s)$ is equal to $V^{\pi}(s)+V^{\bar{\pi}}(s)=G(s)+V^{S}(s)$ given student policy $\pi^{S}(a|s)$ where $\bar{\pi}(s)$ is the ``anti-main'' policy where the decision rule in Equation \ref{eq_main_policy} is reversed such that $o^P$ is chosen if $s \notin L$ and $o^S$ otherwise.
\begin{proof}
Let us consider the following cases:
\setcounter{casep}{0}
\begin{casep}[State $s \in L$]
We can write $V^{\pi}(s_t) - V^{S}(s_t)$ as a function of finite horizons:
\begin{equation}
    \begin{split}
        V^{\pi}(s_t) - V^{S}(s_t) = & H_{T_1}(s_t) + \gamma^{T_2-t} H_{T_3}(s_{T_2}) + ...\\
            & - (H_{T_1}^{S}(s_t) + \gamma^{T_2-t}H_{T_3}^{S}(s_{T_2}) + ...)
    \end{split}    
\end{equation}
where $t \leq T_1 \leq T_2 \leq ...$ such that $s_t \in L, s_{T_1} \notin L, s_{T_2} \in L, ...$.  Similar we can write $V^{\pi}(s_t) - G(s_t)$ as a function of finite horizons:
\begin{equation}
    \begin{split}
        V^{\pi}(s_t) - G(s_t) = & \gamma^{T_1-t} H_{T_2}^{S}(s_{T_1}) + \gamma^{T_3-t} H_{T_4}^{S}(s_{T_3}) + ...\\
            & - (\gamma^{T_1-t} H_{T_2}(s_{T_1}) + \gamma^{T_3-t} H_{T_4}(s_{T_3}) + ...)
    \end{split}
\end{equation}
Adding $V^{\pi}(s_t) - V^{S}(s_t)$ and $V^{\pi}(s_t) - G(s_t)$ together, gives us 
\begin{equation}
    \begin{split}
        2V^{\pi}(s_t) - G(s_t) - V^{S}(s_t) & = V^{\pi}(s_t) - V^{\bar{\pi}}(s_t)\\
        V^{\pi}(s_t) + V^{\bar{\pi}}(s_t) & = G(s_t) + V^{S}(s_t)
    \end{split}
\end{equation}
\end{casep}
\begin{casep}[State $s \notin L$]
Similar to the previous case, we can write $V^{\pi}(s_t) - V^{S}(s_t)$ as a function of finite horizons:
\begin{equation}
    \begin{split}
        V^{\pi}(s_t) - V^{S}(s_t) = & \gamma ^{T_1-t}H_{T_2}(s_{T_1}) + \gamma^{T_3-t} H_{T_4}(s_{T_3}) + ...\\
            & - (\gamma ^{T_1-t}H_{T_2}^{S}(s_{T_1}) + \gamma^{T_3-t} H^{S}_{T_4}(s_{T_3}) + ...)
    \end{split}
\end{equation}
where $t \leq T_1 \leq T_2 \leq ...$ such that $s_t \notin L, s_{T_1} \in L, s_{T_2} \notin L, ...$.  Similarly we can write $V^{\pi}(s_t) - G(s_t)$ as a function of finite horizons of the student policy:
\begin{equation}
    \begin{split}
        V^{\pi}(s_t) - G(s_t) = & H_{T_1}^{S}(s_t) + \gamma^{T_2-t}H_{T_3}^{S}(s_{T_2}) + ...\\
            & - (H_{T_1}(s_t) + \gamma^{T_2-t}H_{T_3}(s_{T_2}) + ...)
    \end{split}
\end{equation}
Adding $V^{\pi}(s_t) - V^{S}(s_t)$ and $V^{\pi}(s_t) - G(s_t)$ together, gives us 
\begin{equation}
    \begin{split}
        2V^{\pi}(s_t) - G(s_t) - V^{S}(s_t) & = V^{\pi}(s_t) - V^{\bar{\pi}}(s_t)\\
        V^{\pi}(s_t) + V^{\bar{\pi}}(s_t) & = G(s_t) + V^{S}(s_t)
    \end{split}
\end{equation}
\end{casep}
Therefore, for all states $s \in S$, we have $V^{\pi}(s)+V^{\bar{\pi}}(s)=G(s)+V^{S}(s)$.
\end{proof}
\label{lem_main_value_student}
\end{lem}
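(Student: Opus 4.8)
The plan is to mirror the proof of Lemma~\ref{lem_main_value_recovery} in structure, substituting the student finite-horizon value $H_T^{S}$ from Definition~\ref{def_finite_horizon_student} for the recovery finite-horizon value $H_T^{R}$ throughout. As in that lemma, I would split on whether the starting state lies in the initiation set $L$ (Definition~\ref{def_initiation_set}), handling $s \in L$ and $s \notin L$ separately, since this determines whether the main policy's first segment is generated by the primal option or by the student option. The only ingredients needed are the two telescoping identities $H_T(s_t) = G(s_t) - \gamma^{T-t} G(s_T)$ and $H_T^{S}(s_t) = V^{S}(s_t) - \gamma^{T-t} V^{S}(s_T)$ from Definitions~\ref{def_finite_horizon_source} and \ref{def_finite_horizon_student}; the finite-horizon inequality Lemmas~\ref{lem_finite_horizon_student_inequality_s_in_L} and \ref{lem_finite_horizon_student_inequality_s_notin_L} are not required here, because we seek an exact identity rather than a bound.

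First I would fix a state $s_t$ and let $t \leq T_1 \leq T_2 \leq \dots$ be the successive times at which the main policy switches between the primal and student options along its trajectory, so that consecutive segments alternate between states in $L$ (driven by $\mu$ and contributing $H$-terms) and states outside $L$ (driven by $\pi^{S}$ and contributing $H^{S}$-terms). Expanding $V^{\pi}(s_t)$ as the discounted concatenation of these segment values, I would then write $V^{\pi}(s_t) - V^{S}(s_t)$ by telescoping $V^{S}$ along the same switch times: the student segments cancel, leaving only the primal segments $H_{T_1}(s_t) + \gamma^{T_2 - t} H_{T_3}(s_{T_2}) + \dots$ minus their $H^{S}$ counterparts. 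Symmetrically, telescoping $G$ along the same switch times in $V^{\pi}(s_t) - G(s_t)$ cancels the primal segments and leaves only the student segments. The $s \notin L$ case is identical except that the first segment is generated by the student option, shifting the alternation by one index.

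Adding the two expressions, the quantity $2V^{\pi}(s_t) - G(s_t) - V^{S}(s_t)$ on the left recombines on the right into exactly the alternating segment decomposition of $V^{\pi}(s_t) - V^{\bar{\pi}}(s_t)$, where $\bar{\pi}$ runs the student option on $L$ and the primal option on $S \setminus L$, as obtained by reversing the decision rule of Definition~\ref{def_main_policy}. Rearranging this equality yields $V^{\pi}(s) + V^{\bar{\pi}}(s) = G(s) + V^{S}(s)$. Carrying this out in both cases and observing that every state falls into one of them completes the argument.

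I expect the main obstacle to be the bookkeeping in the middle step: one must track precisely which finite-horizon terms survive the telescoping cancellation in $V^{\pi} - V^{S}$ versus $V^{\pi} - G$ (primal segments in the former, student segments in the latter), and then recognize that summing the two surviving families reconstitutes the alternating decomposition of the anti-main value $V^{\bar{\pi}}$. Keeping the discount factors $\gamma^{T_k - t}$ aligned across the two telescopings, and verifying that the indexing of the switch times stays consistent between the $s \in L$ and $s \notin L$ cases, is where care is required; the algebra itself is routine once the segment structure is fixed.
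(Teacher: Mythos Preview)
Your proposal is correct and follows essentially the same route as the paper's own proof: a two-case split on whether $s_t \in L$, expressing $V^{\pi}-V^{S}$ and $V^{\pi}-G$ as alternating sums of finite-horizon segments via Definitions~\ref{def_finite_horizon_source} and~\ref{def_finite_horizon_student}, then adding to obtain $2V^{\pi}-G-V^{S}=V^{\pi}-V^{\bar\pi}$ and rearranging. Your observation that Lemmas~\ref{lem_finite_horizon_student_inequality_s_in_L} and~\ref{lem_finite_horizon_student_inequality_s_notin_L} are not needed here is also consistent with the paper, which does not invoke them in this proof.
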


Finally, combining these results we get Theorem \ref{thm_student_improvement} which shows that if the true value of the student policy improves, the true value of the main policy is guaranteed to improve as well.

\improvementstudenttheorem*
\begin{proof}
From Lemma \ref{lem_main_value_student}, we have the value of the main policy $V^{\pi}(s)+V^{\bar{\pi}}(s)=G(s)+V^{S}(s)$ and the value of the updated policy induced by the improved student policy given as $V^{\pi'}(s)+V^{\bar{\pi}'}(s)=G(s)+V^{S'}(s)$.  Since $V^{S'}(s) \geq V^{S}(s)$ for all $s \in S$, we have the following
\begin{equation}
    \begin{split}
        V^{S'}(s) - V^{S}(s) & \geq 0\\
        V^{\pi'}(s) + V^{\bar{\pi}'} - V^{\pi}(s) - V^{\bar{\pi}} & \geq 0\\
        V^{\pi'}(s) - V^{\pi}(s) & \geq V^{\bar{\pi}}(s) - V^{\bar{\pi}'}(s)
    \end{split}
    \label{eq_main_improve_inequality_student}
\end{equation}
Now suppose that there exists a states $s$ such that $V^{\pi}(s) > V^{\pi'}(s)$ for any state $s \in S$.  We will try to show by contradiction that this cannot be true.  If $V^{\pi}(s) > V^{\pi'}(s)$, then $V^{\pi'}(s) - V^{\pi}(s) < 0$ and it follows from the inequality in equation \eqref{eq_main_improve_inequality_student} that $V^{\bar{\pi}}(s) - V^{\bar{\pi}'}(s) < 0$.  In addition, we can use Lemma \ref{lem_main_value_student} to write $V^{\pi}(s) > V^{\pi'}(s)$ as
\begin{equation}
    \begin{split}
        V^{\pi}(s) & > V^{\pi'}(s)\\
        V^{S}(s) - V^{\bar{\pi}}(s) & > V^{S'}(s) - V^{\bar{\pi}'}(s)\\
        V^{\bar{\pi}'}(s) - V^{\bar{\pi}}(s) & > V^{S'}(s) - V^{S}(s)\\
            & > 0
    \end{split}
\end{equation}
However, $V^{\bar{\pi}'}(s) - V^{\bar{\pi}}(s) > 0$ contradicts that $V^{\bar{\pi}}(s) - V^{\bar{\pi}'}(s) < 0$ and thus all states $s \in S$ must satisfy $V^{\pi'}(s) 
\geq V^{\pi}(s)$.
\end{proof}

In order to support our discussion and understanding of the impact learning has on the value of the main policy, it is easy to show that an improvement in the student policy leads to a contraction of the initiation set $L$ given in Lemma \ref{lem_student_improvement_contracts_L}.  The implication is that learning will result in reducing the dependency on the primal option as the recovery option improves in a manner similar to curriculum learning.

\begin{lem}[Improving Student Contracts $L$]
If the student policy $\pi^{S}(a|s)$ is improved such that $V^{S'}(s) \geq V^{S}(s)$ then the new $\tau(s)=V^{S'}(s)$ results in a new initiation set $L'$ such that $L' \subseteq L$.
\begin{proof}
Since state $s \in L'$, thus $G(s) \geq V^{S'}(s)$; however, since $V^{S'}(s) \geq V^{S}(s)$, then $G(s) \geq V^{S}(s)$ and $s \in L$.  Thus all states $s \in L'$ must also be in $L$.  Now suppose that there exists a state $s \in L'$ that is not in $L$, then $G(s) < V^{S}(s)$ and $G(s) \geq V^{S'}(s)$ according to Definition \ref{def_initiation_set} which results in a contradiction.  Thus $L' \subseteq L$.
\end{proof}
\label{lem_student_improvement_contracts_L}
\end{lem}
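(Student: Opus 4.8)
The plan is to prove the containment by a direct pointwise argument, exploiting the fact that the initiation set in Definition \ref{def_initiation_set} is a super-level set of the fixed success predictor $G(s)$: raising the threshold $\tau(s)$ pointwise can only remove states, never add them. This is structurally identical to the already-established recovery analogue in Lemma \ref{lem_recovery_improvement_contracts_L}, so I expect the same two-line chain of inequalities to suffice.

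First I would fix an arbitrary state $s \in L'$, where $L'$ is built from the updated threshold $\tau(s) = V^{S'}(s)$. By Definition \ref{def_initiation_set}, membership $s \in L'$ is exactly the condition $G(s) \geq V^{S'}(s)$. Next I would invoke the improvement hypothesis $V^{S'}(s) \geq V^{S}(s)$ for all $s \in S$ and chain the two bounds by transitivity to obtain $G(s) \geq V^{S'}(s) \geq V^{S}(s)$, hence $G(s) \geq V^{S}(s)$. Finally I would recognize the right-hand inequality as precisely the defining condition for membership in the original set $L$ (which uses the threshold $\tau(s) = V^{S}(s)$), so $s \in L$. Since $s$ was arbitrary, $L' \subseteq L$.

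There is essentially no obstacle here; the statement is a pure monotonicity fact about super-level sets, and the only care required is to keep explicit track of which threshold defines which set, so that Definition \ref{def_initiation_set} is applied with the correct $\tau(s)$ in each direction. If desired, one could also phrase the same argument by contradiction — supposing some $s \in L'$ with $s \notin L$ forces $G(s) < V^{S}(s)$ while simultaneously $G(s) \geq V^{S'}(s) \geq V^{S}(s)$, an immediate contradiction — but the direct inclusion argument above is cleaner and is all that is needed.
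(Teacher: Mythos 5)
Your proof is correct and is essentially identical to the paper's: the same direct pointwise argument chaining $G(s) \geq V^{S'}(s) \geq V^{S}(s)$ via Definition \ref{def_initiation_set} (the paper even includes the redundant contradiction variant you mention as optional). No gaps; nothing further is needed.
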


\subsection{Algorithms}
\begin{algorithm}
\caption{Learning a Student Policy with LISPR}
\label{alg_lispr_student}
\begin{algorithmic}[1]
\State Initialize replay memory $D$, $G(s,a)$
\For{i=0,N}
    \State Observe initial state $s_0$
    \For{t=0,T}
        \State Choose an option $o_t$ according to the main policy with probability $1-\epsilon$ otherwise choose randomly
        \If{$o_t$ is Primal $o^P$}
            \State Sample action $a_t$ according to $\mu(a_t|s_t)$
        \Else
            \State Sample action $a_t$ according to $\pi^{S}(a_t|s_t)$
        \EndIf
        \State Execute action $a_t$
        \State Observe state $s_{t+1}$, reward $r_{t+1}$ and done flag
        \If{done}
            \State $\gamma_{t+1} = 0$
        \Else
            \State $\gamma_{t+1} = \gamma$
        \EndIf
        \State Store transition $(s_t,o_t,a_t,r_{t+1},\gamma_{t+1},s_{t+1})$ in $D$
        \State Sample random minibatch $B$ from $D$
        \State Update $G(s_i, a_i;\theta)$ for minibatch $B$ according to \eqref{eq_success_update}
        \State Update $\pi^{S}(a_i|s_i)$ and $Q^{S}(s_i, a_i)$ for minibatch $B$ according to any off-policy RL algorithm
    \EndFor
\EndFor
\end{algorithmic}
\end{algorithm}

The update for $G(s,a;\theta)$ parameterized by $\theta$ is an off-policy update using temporal difference learning given by the loss function $L_G(\theta)=\frac{1}{2}\E_{D}{\left[ (\delta_i)^2 \right]}$ where $\delta_i = G(s_i, a_i; \theta) - y_i$, the bootstrapped prediction is $y_i=r_{i+1} + \gamma_{i+1} G(s_{i+1}, \hat{a}_{i+1})$ and $\hat{a}_{i+1}$ is the action sampled from the source policy for the next state, i.e. $\hat{a}_{i+1} \sim \mu(a_{i+1}|s_{i+1})$.
The gradient of $L_G(\theta)$ with respect to $\theta$ is thus given by 

\begin{equation}
    \nabla_{\theta}{L_G(\theta)} = \E_{D}{\left[ \delta_i\nabla_{\theta}{G(s_i,a_i)} \right]}
    \label{eq_success_update}
\end{equation}

The proposed algorithm would benefit from methods that reduce the maximization bias such as double clipped Q-learning and others to get better early estimates for $Q^{S}(s,a)$ which results in better utilization of the primal option early in learning.  We had more success estimating using $\tau(s)=V^{\pi}(s)$ as the threshold in the main policy and approximating $V^{\pi}(s)$ with a biased estimate $V^{\hat{\pi}}(s)$ where $\hat{\pi}$ is the behavior policy which includes exploration.  $V^{\hat{\pi}}(s)$ was approximated using samples from the replay buffer.  It is not clear how important this bias is as long as $G(s) \geq V^\pi(s)$ when the source policy is optimal and $G(s) < V^\pi(s)$ otherwise.
The situation merits further investigation on less biased estimates of $V^\pi(s)$.

An algorithm for learning recovery policy is also provided as Algorithm \ref{alg_lispr_recovery}. Recovery policy learning is more complicated and computationally less efficient than student policy learning. Algorithm \ref{alg_lispr_recovery} uses off-policy methods. Since the recovery policy terminates when switching from recovery to source, however, on-policy RL algorithms could also be suitable.

\begin{algorithm}
\caption{Learning a Recovery Policy with LISPR}
\label{alg_lispr_recovery}
\begin{algorithmic}[1]
\State Initialize replay memory $D$, $G(s,a)$
\For{i=0,N}
    \State Observe initial state $s_0$
    \For{t=0,T}
        \State Choose an option $o_t$ according to the main policy with probability $1-\epsilon$ otherwise choose randomly
        \If{$o_t$ is Primal $o^P$}
            \State Sample action $a_t$ according to $\mu(a_t|s_t)$
        \Else
            \State Sample action $a_t$ according to $\pi^{R}(a_t|s_t)$
        \EndIf
        \State Execute action $a_t$
        \State Observe state $s_{t+1}$, reward $r_{t+1}$ and done flag
        \If{done}
            \State $\gamma_{t+1} = 0$
        \Else
            \State $\gamma_{t+1} = \gamma$
        \EndIf
        \State Store transition $(s_t,o_t,a_t,r_{t+1},\gamma_{t+1},s_{t+1})$ in $D$
        \State Sample random minibatch $B$ from $D$
        \State Update $G(s_i, a_i;\theta)$ for minibatch $B$ according to \eqref{eq_success_update}
        \For{each $(s_i,o_i,a_i,r_{i+1},\gamma_{i+1},s_{i+1}) \in B$}
            \State Choose $o_{i+1}$ according to the main policy for state $s_{i+1}$
            \If{$o_{i+1}$ is the recovery option $o^R$}
                \State Set $\gamma_{i+1} = 0$
                \State Compute $r_{i+1}=G(s_i, a_i)$
            \EndIf
        \EndFor
        \State Store modified transitions in $A$
        \State Update $\pi^{R}(a_i|s_i)$ and $Q^{R}(s_i, a_i)$ for $A$ according to any RL algorithm
    \EndFor
\EndFor
\end{algorithmic}
\end{algorithm}

Both Algorithms \ref{alg_lispr_student} and \ref{alg_lispr_recovery} will benefit from methods that reduce the early biased estimates of the value function $V^{S}(s)$ and $V^{R}(s)$ respectively.  Our approach of estimating the value of the behavior policy during learning was effective but certainly may not be the best approach.

\subsection{Experiment Details}
Here we describe the details of the experiments including the hyper-parameters used and their parameter sweeps.

\subsubsection{Tabular Tasks}
For all tabular experiments, $\gamma=0.99$ (although $\gamma=1$ also worked fine since the problem is episodic), where the learning rate and $\epsilon$ were annealed.
Q-learning with $\lambda$-trace was used for learning, where the parameter sweeps were ${0, 0.2, 0.4, 0.6, 0.8, 1}$ for $\lambda$, ${0.1, 0.25, 0.5}$ for learning rate $\alpha$, and ${0.5, 0.75, 1.0}$ for the initial $\epsilon$ value.
The exploration of the main policy was implemented with $\epsilon$-greedy as well and swept over the set ${0.25, 0.5, 0.75, 1.0}$.
The parameters with a final performance of 1 and maximum area under the curve where chosen as optimal values.
The tuned parameters for \textbf{multiroom world} are given in Table \ref{table_room_world_parameters}.

\begin{table}
    \caption{Tuned parameters for \textbf{multiroom world}}
    \centering
    \begin{tabular}{|c c c|} 
        \hline
        Parameters & Recovery & Baseline \\ 
        \hline\hline
        $\alpha$ & 0.25 & 0.5 \\ 
        \hline
        $\lambda$-trace & 0.0 & 0.6 \\
        \hline
        $\epsilon$-greedy & 1.0 & 1.0 \\
        \hline
        Main $\epsilon$-greedy & 0.25 & - \\
        \hline
        Max Steps & 500,000 & 500,000 \\
        \hline
    \end{tabular}
    \label{table_room_world_parameters}
\end{table}

The tuned parameters for \textbf{box world} are given in Table \ref{table_box_world_parameters}.

\begin{table}
    \caption{Tuned parameters for \textbf{box world}}
    \centering
    \begin{tabular}{|c c c|} 
        \hline
        Parameters & Recovery & Baseline \\ 
        \hline\hline
        $\alpha$ & 0.25 & 0.1 \\ 
        \hline
        $\lambda$-trace & 1.0 & 0.6 \\
        \hline
        $\epsilon$-greedy & 1.0 & 1.0 \\
        \hline
        Main $\epsilon$-greedy & 0.5 & - \\
        \hline
        Max Steps & 2,000,000 & 2,000,000 \\
        \hline
    \end{tabular}
    \label{table_box_world_parameters}
\end{table}

In all tabular experiments, the $\epsilon$ of the Q-learning agent was annealed from its initial value to a final value of 0.1 when the max number of steps was reached.

\subsubsection{Social Navigation Task}
Anywhere from two to three pedestrians are generated at random positions in the map.
Each pedestrian had random goals in the map while avoiding other pedestrians.
The pedestrians ignore the agent, effectively always asserting the right of way in relation to the agent. This results in many collisions that the agent must learn to avoid.
Figure \ref{fig_toy_example} shows an example scenario with two pedestrians.  

\begin{figure}
    \centering
    \includegraphics[width=0.45\textwidth]{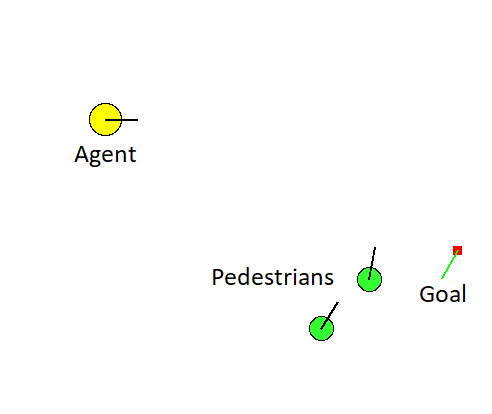}
    \caption{An example of a social navigation scenario.}
    \label{fig_toy_example}
\end{figure}

The observation of the agent is a 2D lidar scan and relative goal position for the agent $g$.
A history of 20 frames is supplied to the agent.
The action space is a velocity vector $v$.
The source policy is a heuristic policy given by $v=\frac{g}{|g|}v^{max}$ where $v^{max}=1.0$ is the maximum speed of the agent.
The time step in the simulator is 0.05 seconds.
The heuristic policy does not perform well among pedestrians and achieves success on average 67\% and collision the remaining 33\% of the time.
The average return for the heuristic policy is $0.0938$. 

The DDPG exploration is an Ornstein-Uhlenbeck noise process with $\theta=0.15$, $\sigma=0.2$ with $\delta t = 1.0$.
The same exploration process is used for learning the student policy with LISPR and learning tabula rasa.
The actor and critic used a target network smoothing factor of $\tau=0.01$ for both networks.  The actor and critic learning rates were both $\alpha=0.0001$.
The discount was $\gamma=0.99$.
DDPG is implemented with double clipped Q-learning as with TD3 where every second actor update is skipped.
The replay buffer had a capacity of 100,000 transitions and the minibatch size was 32.
An update was performed every two simulator steps for a total of 1 million updates (i.e. 2 million environment steps).
For LISPR with DDPG, the same parameters are used.
In addition, 1000 samples are collected from the environment using the source option to prime the replay buffer with good samples to help learn a good initial estimate for $G(s)$.
And a tolerance of 0.001 was used when checking for equality between $G(s)$ and $V^{\pi}(s)$ for the main policy decision rule.
The main policy $\epsilon$-greedy was $\epsilon=0.2$.
While many different values were tried, it was found that the value did not have a large effect on the performance of the agent.
The results with DDPG are shown in Figure \ref{fig_toy_ddpg}.
The Ornstein-Uhlenbeck exploration process was tuned to achieve the results depicted where there is only a slight advantage with LISPR over DDPG.
The advantage of LISPR is that all 10 training runs converged to a good solution; however, DDPG failed to converge in some training runs where DDPG learned a policy that remained stationary.
Thus LISPR was found to be more consistent in learning.
It is clear that with the right hand-engineered exploration policy, one can achieve competitive performance with LISPR although LISPR may converge to the optimal solution more consistently.

\begin{figure}
    \centering
    \includegraphics[width=0.45\textwidth]{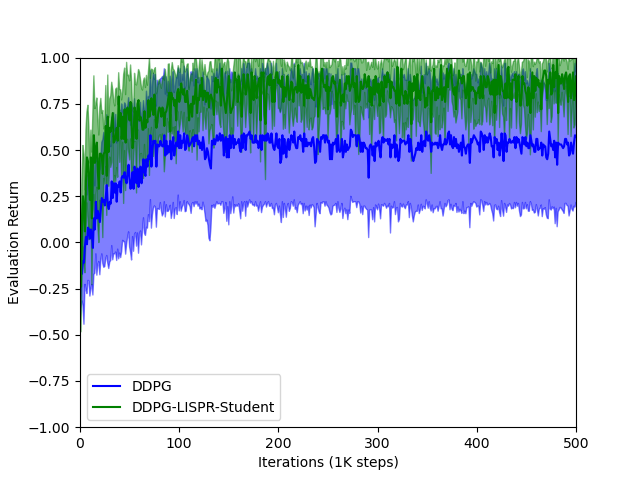}
    \caption{DDPG versus LISPR with DDPG on the \textbf{social navigation task}}
    \label{fig_toy_ddpg}
\end{figure}

The SAC agent was trained in the same way as DDPG except the exploration is learned by learning $\alpha$.
The initial value of $\alpha$ was 0.1.
Double clipped Q-learning was used and a target network smoothing value of $\tau=0.1$ was used.
All learning rates were 0.0001.
It was found that SAC could tolerate larger values of target network smoothing for faster learning.
The same parameters are used for source option with SAC but like DDPG, 1000 samples are collected from the environment using the source option to prime the replay buffer.
Similar experiments were conducted using the same reward shaping as \citep{jinj2020socialnavigation} and the results showed that DDPG learned nearly the same as LISPR with DDPG and that LISPR with SAC learned a bit faster.
The neural network architectures for both DDPG and SAC were the same as in \citep{jinj2020socialnavigation}.



\subsubsection{Bipedal Walker Task}
This task involves transfer from the popular BipedalWalker task to the BipedalWalkerHardcore task in Gym.
The DDPG exploration is an Ornstein-Uhlenbeck noise process with $\theta=0.15$, $\sigma=0.3$ with $\delta t = 1.0$.
The same exploration process is used for learning the student policy with LISPR as it is for learning tabula rasa.
It was tuned specifically for the source task on BipedalWalker.
The actor and critic used a target network smoothing factor of $\tau=0.01$ and $\tau=0.005$ respectively.
The actor and critic learning rates were $\alpha=0.0001$ and $\alpha=0.001$ respectively.
The discount was $\gamma=0.99$.
The DDPG implemented used double clipped Q-learning similar to TD3 and every second actor update was skipped.
The replay buffer contained 1,000,000 transitions and minibatch size was 32.
An update was performed every two simulator steps for a total of 2 million updates (i.e. 4 million environment steps).
For source option with DDPG as well as progressive networks with DDPG, the same parameters are used.
In addition, 10,000 samples are collected from the environment using the source option to prime the replay buffer although this was not found to be a critical parameter.
A tolerance of 0.001 was used when checking for equality between $G(s)$ and $V^{\pi}(s)$ for the main policy decision rule.
The exploration of the main policy was done using $\epsilon$-greedy where $\epsilon=0.2$.
Other values of $\epsilon$ did not have a large effect on performance although when $\epsilon>0.5$, performance dropped a bit.
Poor learning was observed if $\epsilon=0$ thus some exploration is needed.


The SAC agent was trained in the same way as DDPG except the exploration is learned by learning $\alpha$.
The initial value of $\alpha$ was $\frac{1}{300}$ since the reward was scaled by the same factor.
Double clipped Q-learning was used and a target network smoothing value of $\tau=0.1$ was used.
All learning rates were 0.0001.
It was found that SAC could tolerate larger values of target network smoothing $\tau$ for faster learning.
The same parameters are used for LISPR with SAC but like with DDPG, 1000 samples are collected from the environment using the LISPR to prime the replay buffer.

The neural network architecture for the actor and critic were identical for both DDPG and SAC with the only exception being the number of outputs.
All networks consisted of 3 fully connected layers with rectified linear activation for all layers except the last layer.
The first layer consisted of 400 linear units and the second layer consisted of 300 linear units.
The output layer of both the DDPG and SAC critic was scalar with linear activation.
The input to the DDPG and SAC critic was the state and action which is 24 and 4 dimensional vectors respectively.
The DDPG actor had 4 outputs with hyperbolic tangent activation.
The SAC actor had 8 outputs with linear activation outputting the mean and standard deviation for each action.
A hyperbolic tangent function was applied to the SAC action output.

\subsubsection{Lunar Lander Task}
Learning the lunar lander Gym task with LISPR results in the agent solving the task more consistently during training.
A heuristic policy was used as the source policy.
This was the same heuristic policy implemented in the gym environment source code.
It was found that the heuristic policy was usually slightly better than the final learned policies; thus, the policy was crippled to mimic an initiation set that needed to be learned by applying a poor default action of full main engine throttle if the agent was too far left or right of the landing pad.

The same exploration process used in BipedalWalker was used in LunarLander for DDPG.
It was observed that performance was only slightly sensitive to the exploration parameters.
The learning rates were all 0.0001 and the target network smoothing parameters were all $\tau=0.1$.
The replay buffer had a capacity of 1 million samples and a minibatch size of 128 was used.
Updates were applied after every 2 steps in the environment for a total of 1 million updates (i.e. a total of 2 million samples).
Very similar parameters were used with SAC except the exploration parameter $\alpha$ was set to $\frac{1}{1000}$ since the reward was scaled by the same amount.
When learning with LISPR, a fixed threshold of $\tau(s)=0.2$ was used to understand the performance with fixed thresholds.
The result is shown in Figures \ref{fig_lunar_ddpg} and \ref{fig_lunar_sac} are averaged over 3 runs.
We see more consistent and slightly faster learning for the LISPR framework when using both DDPG and SAC.
The same neural network architecture used for bipedal walker was used for lunar lander.

\begin{figure}
    \centering
    \includegraphics[width=0.45\textwidth]{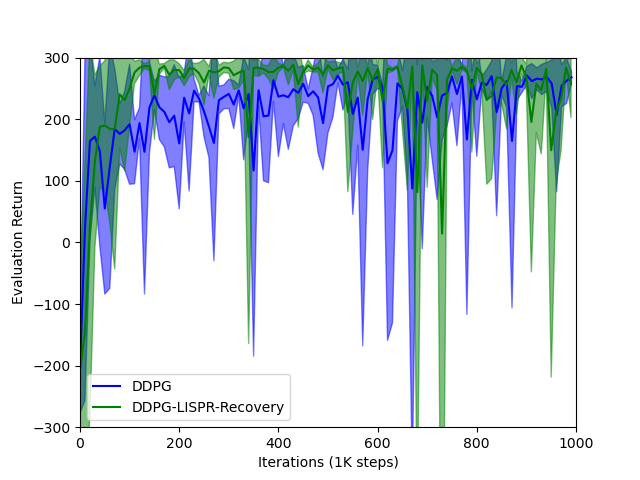}
    \caption{DDPG versus LISPR with DDPG on the \textbf{lunar lander task} with 90\% confidence interval}
    \label{fig_lunar_ddpg}
\end{figure}

\begin{figure}
    \centering
    \includegraphics[width=0.45\textwidth]{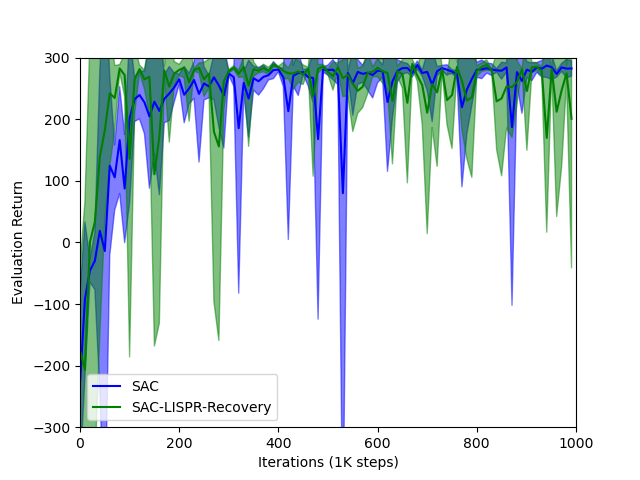}
    \caption{SAC versus LISPR with SAC on the \textbf{lunar lander task} with 90\% confidence interval}
    \label{fig_lunar_sac}
\end{figure}

\end{document}